\documentclass[11pt]{article}

\usepackage{nicefrac}
\newcommand{\graycomment}[1]{\hfill {\color{gray}\texttt{//} #1}}

\usepackage[
  citingstyle=numbers,     
  bibliostyle=unsrtnat,    
  bibfile=references,      
]{brainlab}

\usepackage{shortcuts}

\setbrainmeta{
  title={Zero-order Parameter-free Optimization for LMO-based Methods: Novel Approach for Efficient Fine-tuning},
  authors={
    Dmitriy Bystrov\textsuperscript{1,2}, 
    Daniil Medyakov\textsuperscript{1,2}, 
    Dmitry Bylinkin\textsuperscript{1,2}, 
    Aleksandr Beznosikov\textsuperscript{1,2,3}
  },
  affiliations={
    \textsuperscript{1}Moscow Independent Research Institute of Artificial Intelligence \\
    \textsuperscript{2}Basic Research of Artificial Intelligence Laboratory (BRAIn Lab) \\
    \textsuperscript{3}Innopolis University \\
  },
  abstract={
    Fine-tuning large language models (LLMs) has become a central application of modern optimization, enabling pretrained models to adapt to diverse downstream tasks and domain-specific data. A major obstacle in large-scale fine-tuning is the memory overhead of backpropagation, which requires storing activations, gradients, and optimizer states. Zeroth-order (ZO) optimization offers a memory-efficient alternative, but its performance is highly sensitive to the stepsize and smoothing parameter, often requiring costly task-specific tuning. Parameter-free (PF) optimization addresses this issue by adapting algorithmic parameters without prior knowledge of problem-dependent constants. Moreover, large-scale fine-tuning can benefit from geometry-aware updates that account for the heterogeneous structure of parameter blocks, which can be modeled through methods that exploit linear minimization oracle (LMO). In this work, we study PF adaptation for LMO-based ZO optimization and introduce $\texttt{AdaNAGED}$, a method that unifies gradient-free training, adaptive tuning, and non-Euclidean update geometry. We establish convergence guarantees and validate the method on large-scale LLM fine-tuning task with $\texttt{OPT}-1.3\mathrm{B}$ model.
  },
}

\begin{document}
\begin{mainpart}

\section{Introduction}

Optimization is a fundamental component of modern machine learning, and its role is particularly prominent in the fine-tuning of large language models (LLMs) \citep{howard2018universal, lester2021power}. Fine-tuning offers an effective and flexible way to adapt pretrained models to diverse downstream tasks while leveraging the knowledge acquired during pretraining \citep{gururangan2020don,zhang2020dialogpt,wu2025llm}. Formally, we consider the following optimization problem:
\begin{equation}
    \min_{x\in \mathbb{R}^d} f(x),
\end{equation}
where $x$ denotes the model parameters, and $f$ is the loss function.
Training is typically performed using gradient-based optimization methods, such as \texttt{SGD} \citep{robbins1951stochastic} or \texttt{Adam} \citep{kingma2014adam}. These methods rely on gradients computed via backpropagation \citep{rojas1996backpropagation}, which provides the first-order information required for parameter updates. This combination has become the standard training pipeline in modern machine learning, offering strong empirical performance and reliable optimization behavior across a wide range of tasks.

Despite its strong empirical performance, backpropagation becomes a major memory bottleneck in large-scale LLM fine-tuning, as it requires storing intermediate activations, gradients, and optimizer states throughout training \citep{rajbhandari2020zero,malladi2023fine}. Since fine-tuning adapts an already pretrained model to a narrower target task or domain, this overhead motivates optimization methods that reduce or avoid the memory costs of first-order training.
Existing approaches address this challenge through block-wise descent methods \citep{richtarik2014iteration,luo2024badam}, quantization \citep{dettmers20218,novikov2023few}, and parameter-efficient fine-tuning methods \citep{sun2022recent,wang2024lora}. However, these methods still rely on backpropagation and therefore \textit{cannot fully remove the memory burden} associated with gradient-based optimization.
A fundamentally different optimization paradigm is provided by zeroth-order optimization (ZO) \citep{ghadimi2013stochastic}. Instead of relying on gradients, ZO methods assume access only to function-value evaluations of the objective. 
Formally, ZO replaces gradients with finite-difference estimates:
\begin{equation}
    \nabla f(x) \approx \frac{f(x+\tau e)-f(x)}{\tau}e,
\end{equation}
where $e$ is a random sample vector and $\tau$ is a smoothing parameter. Zeroth-order optimization is a mature area of optimization, comprising a broad class of algorithms with established convergence guarantees \citep{zhang2024zeroth,lin2025survey,bar2025zoqo}. This paradigm directly addresses the memory bottleneck: instead of storing the intermediate quantities required for backpropagation, ZO methods rely on function-value queries along sampled perturbation directions. In particular, after the finite-difference estimator is constructed, the vector $e$ need not be stored explicitly; following \citet{malladi2023fine}, it can be represented by the seed of a random generator, introducing essentially no additional memory overhead.

The ZO setting introduces several challenges, including the need for careful hyperparameter selection. In particular, the performance of ZO methods can be highly sensitive to the choice of the stepsize and the smoothing parameter. Although theoretically optimal choices exist, computing them typically requires prior access to problem-dependent quantities, such as Lipschitz constant, or the distance to the optimal solution \citep{robbins1951stochastic,stich2019unified,hendrikx2020statistically}, which are \textit{unavailable} in practical applications. As a result, implementations often rely on time-consuming grid search, which must be repeated for each new task. A natural way to address this issue is through parameter-free optimization (PF). Algorithms in this paradigm do not require prior knowledge of problem-specific constants. Instead, they estimate such quantities adaptively from the local optimization landscape and the statistics accumulated along the trajectory. In first-order optimization, this line of work is well developed and includes a variety of schemes for estimating problem-dependent constants while preserving strong convergence guarantees \citep{carmon2022making,deng2024uniformly,kreisler2024accelerated}.

Compared with the first-order setting, parameter-free zeroth-order optimization remains considerably less explored. Extending PF principles to ZO is challenging because hyperparameters must be adapted from finite-difference information rather than exact gradients. This is especially delicate for the smoothing parameter $\tau$: large values increase the bias of the gradient approximation, while overly small values can lead to instability due to the division by $\tau$ \citep{ma2025revisiting}. As a result, ZO-PF methods must adapt not only the stepsize but also the smoothing parameter $\tau$ without access to problem-dependent constants, making this direction challenging.

Beyond hyperparameter choice, ZO fine-tuning also raises the question of how to adapt updates to the geometry of large-scale models. Since different parameter groups may naturally correspond to different vector or matrix structures, Euclidean updates may not always capture the most suitable local geometry. This motivates optimization methods defined with respect to non-Euclidean norms. A prominent example of this principle is $\texttt{Muon}$ \citep{jordan2024muon}, which can be viewed as an $\texttt{SGD}$-type method that exploits spectral-norm geometry for matrix-shaped parameters. By approximately orthogonalizing matrix updates, $\texttt{Muon}$ balances update directions according to the structure of matrix-valued weights. This idea can be formulated more generally using linear minimization oracles (LMOs) over norm balls \citep{bernstein2024old,pethick2025training}. Given a norm-induced domain $\mathcal{D}$, an LMO is defined as
\begin{equation}
\operatorname{lmo}_{\mathcal{D}}(x) = \arg\min_{v \in \mathcal{D}} \langle x, v \rangle .
\end{equation}
Specific choices of the underlying norm recover different update rules: vector norms yield methods related to  $\texttt{Normalized SGD}$ with momentum \citep{cutkosky2020momentum} and $\texttt{Sign-SGD}$ with momentum \citep{sun2023momentum}, whereas matrix norms capture methods such as $\texttt{Muon}$ \citep{jordan2024muon} and $\texttt{Gluon}$ \citep{riabinin2025gluon}. Such geometry-aware updates are especially attractive for large-scale fine-tuning, as they can improve the conditioning of parameter updates and provide more stable optimization across heterogeneous vector- and matrix-shaped blocks.

Parameter-free optimization and LMO-based updates address two essential aspects of ZO fine-tuning. The former reduces the reliance on costly task-specific hyperparameter tuning, while the latter enables optimization under structured non-Euclidean geometries through linear minimization oracles. Despite their individual relevance, it remains unclear \textit{how parameter-free ideas can be carried out in the zeroth-order setting when the update direction is defined by an LMO}. In this work, we provide a comprehensive study of this question.

\section{Related Work and Contribution}
\subsection{Related Work}

\paragraph{Zeroth-order methods.}
Early foundations of gradient-free optimization include simultaneous perturbation methods, which estimate the descent directions from function-value evaluations along randomly perturbed points \citep{spall1998overview}. A modern analysis of stochastic zeroth-order methods for nonconvex optimization was developed in \citep{ghadimi2013stochastic}, where the gradient-based updates are replaced by estimators constructed from zeroth-order oracle queries. Since then, a wide range of estimator designs and sampling strategies has been studied \citep{lin2025survey}. Classic choices include random directions on the unit sphere \citep{flaxman2004online}, coordinate-wise perturbations based on the canonical basis \citep{duchi2015optimal}, and Gaussian smoothing estimators \citep{ghadimi2013stochastic,malladi2023fine}. These estimators are typically designed to approximate the gradient of a smoothed objective using only function values.

More recent work has explored more structured perturbation schemes and algorithmic improvements. For example, $\texttt{HiZOO}$ uses Hessian-informed zeroth-order updates to improve the convergence of LLM fine-tuning \citep{zhao2024second}. Another line of work connects randomized finite-difference estimators with policy-gradient methods, interpreting ZO updates from a reinforcement-learning perspective and relating them to $\texttt{REINFORCE}$-type estimators \citep{williams1992simple,qiu2025zeroth,seung2026low}. Orthogonal advances include momentum-based ZO methods \citep{chen2019zo}, and variance-reduced estimators \citep{liu2018zeroth,ji2019improved,gautam2024variance}.

ZO methods have been successfully applied to deep neural networks, demonstrating the practical relevance of this paradigm \citep{chen2017zoo,zhang2024revisiting}. Moreover, they have emerged as a promising approach to alleviating the memory constraints of LLM fine-tuning \citep{malladi2023fine,lin2025survey}.

\paragraph{Parameter-free optimization.}
A large part of the theory of adaptive and parameter-free optimization has been developed for non-smooth convex objectives. In this setting, the optimal stepsize for gradient-based methods \citep{nemirovskij1983problem} is typically chosen as
\begin{equation}
\label{eq:optimal_stepsize_non-smooth}
    \gamma = \frac{\|x^0-x^*\|}{M\sqrt{T}}.
\end{equation}
where $M$ is the Lipschitz constant of the objective $(|f(x)-f(y)|\leq M\|x-y\|\ \forall x,y\in \mathbb{R}^d)$. In practice, however, neither $M$ nor the initial distance to the solution $\|x^0-x^*\|$ is available. The core idea of parameter-free optimization is to estimate such quantities adaptively and recover convergence guarantees without requiring them as input.

Early adaptive methods, including $\texttt{AdaGrad}$ \citep{duchi2011adaptive}, $\texttt{AdaDelta}$ \citep{zeiler2012adadelta}, and $\texttt{Adam}$ \citep{kingma2014adam}, were among the first widely used approaches to reduce the dependence on prior knowledge of problem constants. These methods exploit the statistics of already computed gradients to rescale the updates and construct an adaptive stepsize schedule throughout training. However, some of them introduce additional memory overhead due to the stored statistics, and they generally do not explicitly adapt to the initial distance to the optimum.
Parameter-free ideas have also been extensively studied in online learning, although under a different problem formulation \citep{orabona2019modern}. This line of work introduced coin-betting, reward-doubling, and related schemes \citep{orabona2013dimension,mcmahan2014unconstrained,orabona2016coin,cutkosky2018black}. These methods eliminate several forms of manual tuning, but their guarantees are usually tied to online-learning assumptions, such as bounded stochastic oracles.

A subsequent milestone in parameter-free optimization was the development of methods that adapt to the initial distance $\|x^0-x^*\|$ in \eqref{eq:optimal_stepsize_non-smooth}. One of the first parameter-free methods with such adaptation was proposed in \citet{carmon2022making}. However, the construction relies on an additional search procedure, which limits its practical efficiency. The $\texttt{DoG}$ algorithm \citep{ivgi2023dog} was later introduced as an iterative scheme that adaptively estimates both the Lipschitz constant and the distance to the solution, although its distance estimate can be unbounded. Subsequent works \citep{defazio2023learning,mishchenko2023prodigy} addressed this issue by introducing more stable distance estimators. However, some algorithms introduced in these papers still require prior knowledge of the Lipschitz constant and lack stochastic analysis. Recent work has further extended the parameter-free paradigm by incorporating momentum \citep{schaipp2023momo}, weighting mechanisms \citep{kreisler2024accelerated}, and sharper theoretical analyses \citep{attia2023sgd,attia2024free}.

Parameter-free methods have also been studied beyond the non-smooth setting. In the smooth case, $\texttt{DoWG}$ \citep{khaled2023dowg} provides an analysis of an $\texttt{AdaGrad-Norm}$ stepsize. A complementary line of work builds on the adaptive stepsize idea from \citet{malitsky2019adaptive}. In this direction, the work \citep{mishkin2024directional} proposes a parameter-free method based on the local smoothness estimates computed along the optimization trajectory. Later, \citet{medyakovsign} utilized this concept and applied it to the analysis of \texttt{Sign-SGD} in the PF setting.

\paragraph{LMO-based methods.}
Linear minimization oracles (LMOs) first appeared in the context of \texttt{Frank-Wolfe}-type methods for constrained optimization. This line of work studies optimization over structured feasible sets, including convex polyhedra, nuclear-norm balls, and flow polytopes \citep{bomze2021frank,braun2022conditional}. The same technique was later recognized as useful for unconstrained problems formulated with respect to abstract norms.

The foundation of non-Euclidean analysis in this context can be traced to the $\texttt{Sign-SGD}$ algorithm \citep{bernstein2018signsgd,karimireddy2019error}. This method has been shown to outperform $\texttt{SGD}$ when analyzed through the $\ell_1$-norm geometry \citep{balles2018dissecting,balles2020geometry}. Later, $\texttt{Muon}$ was introduced as a matrix-aware algorithm and analyzed under the spectral norm \citep{jordan2024muon}. This sparked a line of related work on quantization \citep{gupta2025effective}, faster randomized SVD decompositions \citep{huang2025limuon}, and layer-wise optimization \citep{riabinin2025gluon}, further establishing non-Euclidean analysis as a useful perspective.

Subsequently, \citep{bernstein2024old} and \citep{pethick2025training} proposed analyzing optimizer convergence under abstract norms using LMOs. This framework generalizes several existing methods, including $\texttt{Normalized SGD}$ with momentum \citep{cutkosky2020momentum}, $\texttt{Sign-SGD}$ with momentum \citep{sun2023momentum}, and $\texttt{Muon}$ \citep{jordan2024muon}.

\paragraph{Combined approaches.}
In recent years, parameter-free ideas have also been extended to zeroth-order optimization. The work \citep{ren2025parameter} initiated this direction with the $\texttt{POEM}$ algorithm. Its stepsize adaptation follows the $\texttt{DoG}$-type rule \citep{ivgi2023dog}, but it uses ZO gradient estimates and retains the Euclidean update geometry. However, their analysis requires a prior estimate of the Lipschitz constant, which weakens the PF nature of the method. Moreover, the scheme requires to store the original model weights, which undermines the memory efficiency. This line was later extended to decentralized optimization by $\texttt{D-POEM}$ \citep{chen2026parameter}, and to variance-reduced schemes \citep{pengparameter}. Nevertheless, these methods require $\mathcal{O}(d)$ function evaluations for ZO gradient estimator construction, which is infeasible for LLM fine-tuning. Moreover, their empirical validation is limited to dimensions below those relevant for large-scale models.

Recent work has also explored parameter-free extensions of LMO-based and non-Euclidean methods. In particular, several variants of $\texttt{Muon}$ incorporate $\texttt{Adam}$-style normalization or adaptive scaling into matrix-aware updates \citep{si2025adamuon,li2025normuon}. Parameter-free concepts have also been studied in LMO-based constrained optimization, mostly in the \texttt{Frank–Wolfe}-based methods \citep{sahu2019towards,carderera2021parameter,ito2023parameter}. However, these methods do not address the issue of memory-efficient LLM fine-tuning.

LMO-based ideas have also appeared in zeroth-order optimization. Recent works exploit the matrix structure in the optimization variables to design more geometry-aware ZO methods \citep{veprikov2024new,chen2024enhancing,petrov2025leveraging}. However, this line of work does not consider parameter-free challenges. 

\subsection{Contribution}
In light of the related work discussed above, our main contributions are as follows.
\begin{itemize}
    \item We propose a novel approach for \textit{constructing parameter-free stepsize} in the zeroth-order setting. Our method estimates local smoothness using differences between ZO gradient approximations at neighboring points along the optimization trajectory. This \textit{preserves the memory-efficient nature of ZO optimization}, as it only requires storing a generator seed corresponding to the perturbation direction of ZO gradient.
    \item Based on the proposed parameter-free estimators, we introduce $\texttt{AdaNAGED}$, an algorithm that exploits LMO-based updates. As a result, \textit{our method combines three desirable properties}: memory-efficient zeroth-order optimization, reduced dependence on hyperparameter tuning, and geometry-aware updates.
    \item We provide \textit{convergence guarantees} for the proposed method considering Lipschitz continuous, smooth and convex objectives. The resulting convergence rate is asymptotically optimal.
    \item We conduct experiments by fine-tuning the $\texttt{OPT-1.3B}$ model on the $\texttt{SST-2}$ task. The performance of our method matches the tuned \texttt{Sign-SGD}, \texttt{Muon} and \texttt{AdaMM}.
\end{itemize}

\section{Preliminaries}\label{sec:preliminairies} 
In this section, we present the preliminary concepts required for constructing and analyzing the desired method. To start with, we define some notation used in the paper. We denote $\|\cdot\|$ and $\|\cdot\|_*$ as an abstract norm and its dual conjugate, respectively, $\mathbb{E}[\cdot]$ represents the mathematical expectation.  Additionally, for generality, we introduce the equivalence factor within the chosen norm and the $2$-norm: $c_2\|v\|\leq\|v\|_2\leq C_2\|v\|, c_{2*}\|v\|_*\leq\|v\|_2\leq C_{2*}\|v\|_* \ \forall v\in\mathbb{R}^d$.

\paragraph{LMO.} In this work, we use LMOs over norm balls of arbitrary radius, which allows the update geometry and the update scale to be controlled separately \citep{pethick2025training}. For a given abstract norm $\|\cdot\|$ and radius $\rho > 0$, we define:
\begin{equation*}
    \operatorname{lmo}(x) = \arg\min_{\|v\|\leq \rho} \langle x,v\rangle.
\end{equation*}
This formulation covers a broad class of non-Euclidean update rules by varying the underlying norm. For example, for the Euclidean norm, $\operatorname{lmo}(x) = -\rho\frac{x}{\|x\|_2}$, while for the $\ell_\infty$-norm, $\operatorname{lmo}(x) = -\rho \cdot sign(x).$ Thus, the choice of norm determines the direction of the update, whereas the radius $\rho$ controls its magnitude.

\paragraph{Zero-order optimization.}
In the ZO setting, gradients are replaced by finite-difference approximations computed from function-value queries. Given a perturbation direction $e$ and a smoothing parameter $\tau > 0$, we define the ZO gradient estimator as
\begin{equation*}
g_\tau(x,e) = \frac{f(x+\tau e) - f(x)}{\tau}e.
\end{equation*}
For the analysis, it is standard to introduce the corresponding smoothed objective $f_\tau(x)=\mathbb{E}[f(x+\tau e)]$, where the expectation is taken over the random perturbation direction $e$ \citep{flaxman2004online,nesterov2017random}. This allows the behavior of ZO methods to be studied in terms of the gradients of the smoothed function rather than the original objective directly.

Next, we list the assumptions regarding the objective.
\begin{assumption}\label{ass:smooth}
    The objective $f$ is $L$-smooth, i.e., for any $x,y\in\mathbb{R}^d $ it implies
    \begin{equation*}
    \|\nabla f(x) - \nabla f(y)\|_2 \leq L\|x-y\|_2,\text{~~where~~} L>0, \|x\|_2=\sqrt{\sum_{i=1}^dx_i^2}.
    \end{equation*}   
\end{assumption}
Lipschitzness of the gradient is an inherent aspect of neural network optimization due to their almost-everywhere smooth behavior \citep{li2018visualizing}. Moreover, smoothness is one of the foundational assumptions in optimization theory \citep{robbins1951stochastic,stich2019unified}.
\begin{assumption}\label{ass:conv}
    The objective $f$ is convex, i.e., for any $x,y\in\mathbb{R}^d $ it implies
    $$f(y) \geq f(x) + \langle \nabla f(x), y-x\rangle. $$
\end{assumption}
Although neural networks, and LLMs in particular, are non-convex, they are shown to establish convex-like properties 
\citep{kleinberg2018alternative, zhou2019sgd,liu2022loss}.  This motivates analyzing the convex setting as a principled baseline. Moreover, convex optimization serves as a theoretical foundation for the design of optimization algorithms. For example, momentum \citep{nesterov2018lectures} and \texttt{AdaGrad} \citep{duchi2011adaptive} were initially developed and analyzed for convex problems.
\begin{assumption}\label{ass:lip}
    The objective $f$ is $M$-Lipschitz, i.e., for any $x,y\in\mathbb{R}^d $ it implies
    $$|f(x) - f(y)| \leq M\|x-y\|, \text{~~where~~} M > 0.$$
\end{assumption}
ZO methods are often analyzed under Lipschitz continuity assumptions, since function-value queries alone require control over how the objective changes under random perturbations. This condition makes it possible to relate finite-difference estimators to the gradient of a smoothed objective and to control the smoothing bias introduced by the perturbation radius $\tau$ \citep{flaxman2004online,nesterov2017random,duchi2015optimal}.

\section{Main Results}\label{sec:main}

\paragraph{Motivation.} We begin by motivating how a parameter-free stepsize should be constructed in our setting. To this end, we first examine the stepsize that would be optimal for a ZO LMO-based method if the relevant problem constants were known. As discussed in the preliminaries (Section \ref{sec:preliminairies}), the standard ZO analysis is naturally carried out for the smoothed objective $f_\tau$. Applying the basic descent argument to an LMO-based update with a fixed stepsize $\gamma$ and smoothing parameter $\tau$ yields
\begin{equation*}\label{eq:zo_lmo_descent}
    \frac{1}{T}\sum_{t=0}^{T-1} \|\nabla f_\tau(x^t)\|_* \leq \frac{\Delta}{\gamma T} + \frac{\gamma L}{2}.
\end{equation*}
Here, $\Delta = f_\tau(x^0)-f_\tau(x^*)$. Balancing the two terms gives the optimal fixed stepsize
\begin{equation}\label{eq:opt_gamma}
    \gamma =\sqrt{ \frac{{\Delta}}{{LT}}}, \text{~~which leads to the convergence rate~~} \mathcal{O}\left(\sqrt{\frac{\Delta L}{T}}\right).
\end{equation}
However, both $\Delta$ and $L$ are problem-dependent quantities and are not available in practice. A parameter-free method should therefore avoid requiring them as inputs. Following the standard principle behind parameter-free algorithms, we construct iterative estimates of these constants along the optimization trajectory and use them directly to set the stepsize. This allows the method to adapt within a single run without an external grid search. The central question is thus how to build valid ZO-compatible approximations of $\Delta$ and $L$ that preserve the desired convergence guarantees.

\paragraph{Our estimators.} We first construct an approximation of the smoothness constant $L$. Our parameter-free objective is closely related to the approach of \citet{mishkin2024directional,medyakovsign}, who use local smoothness estimates computed along the optimization trajectory. In their setting, this estimate takes the form $L^t = \nicefrac{\|\nabla f^t(x^{t+1}) - \nabla f^t(x^t)\|_*}{\|x^{t+1}-x^t\|}$.
This quantity is natural because smoothness is only invoked between two adjacent iterates; thus, it is sufficient to estimate curvature along the realized update direction. In our ZO setting, however, exact gradient oracles are unavailable. We therefore replace the gradients in the local smoothness estimate with their ZO approximations. Specifically, we define
\begin{equation}\label{eq:L}
    L^t = \frac{\|g^t(x^{t+1},e^{t}) - g^t(x^t,e^t)\|_* }{\|x^{t+1}-x^t\|}, \text{~~where~~} g^t(x,e) = g_{\tau^t}(x,e).
\end{equation}
Note that to improve the stability of this estimate, we use the same perturbation direction $e^t$ at both points. Our construction also preserves the memory-efficient nature of ZO optimization: it only requires the corresponding loss differences and the perturbation direction, which can be stored using the seed of a random generator \citep{malladi2023fine}.

For the convergence analysis, and also for the numerical stability of the method, such local estimates must remain bounded. In the approach of \citet{mishkin2024directional,medyakovsign}, this follows directly from Assumption \ref{ass:smooth}. In contrast, boundedness is not obvious for our estimate \eqref{eq:L}. We therefore establish the following lemma, which shows that the proposed estimator remains controlled under our assumptions.
\begin{lemma}\label{lem:L}
    Suppose Assumptions \ref{ass:smooth}, \ref{ass:conv}, and \ref{ass:lip} hold. Then the estimator \eqref{eq:L} is uniformly bounded:
    $$ L^t \leq \frac{C_2}{c_{2*}}L, $$
    where $c_{2*},C_2$ are norm equivalence factors.
\end{lemma}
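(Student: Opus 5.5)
The plan is to bound the numerator of \eqref{eq:L} directly, writing the finite differences as integrals of gradients along the perturbation direction and then applying Assumption~\ref{ass:smooth}. Write $x=x^t$, $x'=x^{t+1}$, $e=e^t$, $\tau=\tau^t$. By the definition of $g_\tau$, the two estimators are both multiples of the same vector $e$, so
\begin{equation*}
g^t(x',e)-g^t(x,e)=\frac{\big(f(x'+\tau e)-f(x')\big)-\big(f(x+\tau e)-f(x)\big)}{\tau}\,e .
\end{equation*}
Sharing the direction $e^t$ between the two points is what makes this work: the estimator difference collapses to a scalar times a single vector.

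Next I control the scalar. By the fundamental theorem of calculus, $f(y+\tau e)-f(y)=\tau\int_0^1\langle\nabla f(y+s\tau e),e\rangle\,ds$ for any $y$. Subtracting this identity at $y=x'$ and $y=x$ gives
\begin{equation*}
\frac{\big(f(x'+\tau e)-f(x')\big)-\big(f(x+\tau e)-f(x)\big)}{\tau}=\int_0^1\langle\nabla f(x'+s\tau e)-\nabla f(x+s\tau e),e\rangle\,ds .
\end{equation*}
By Cauchy--Schwarz and Assumption~\ref{ass:smooth}, its absolute value is at most $L\|x'-x\|_2\|e\|_2$. This step does not involve $\tau$, so no instability from dividing by $\tau$ appears.

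Combining the two steps,
\begin{equation*}
\|g^t(x',e)-g^t(x,e)\|_*\le L\|x'-x\|_2\,\|e\|_2\,\|e\|_* .
\end{equation*}
Now I convert norms using the equivalence factors: $\|e\|_*\le \|e\|_2/c_{2*}$ and $\|x'-x\|_2\le C_2\|x'-x\|$. Dividing by $\|x'-x\|$ yields
\begin{equation*}
L^t\le \frac{C_2}{c_{2*}}\,L\,\|e\|_2^2 .
\end{equation*}

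The only delicate point is the factor $\|e\|_2^2$. To obtain the stated bound, the perturbation direction must be normalized, $\|e^t\|_2=1$; I would invoke the sampling scheme in which $e^t$ is uniform on the Euclidean unit sphere. If instead Gaussian directions were used, the bound would hold only with $\|e^t\|_2^2$ in place of $1$, or in expectation up to a factor $d$. I would state this normalization explicitly.

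Two remarks on scope. Convexity (Assumption~\ref{ass:conv}) and Lipschitzness (Assumption~\ref{ass:lip}) are not needed for this particular bound. Also, if $x^{t+1}=x^t$ the estimator is undefined; since LMO steps have fixed radius $\rho>0$, this case should not arise, and otherwise one would set $L^t=0$ by convention.
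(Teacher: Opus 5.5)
Your proof is correct, and it takes a genuinely different and tighter route than the paper. The paper centers the same mixed second difference $f(x^{t+1}+\tau^t e^t)-f(x^{t+1})-f(x^t+\tau^t e^t)+f(x^t)$ at the midpoint $z$, with $u=\frac12(x^{t+1}-x^t)$ and $v=\frac{\tau^t}{2}e^t$. It then Taylor-expands $f(z\pm u\pm v)$ to second order with Lagrange remainders, so that the zeroth- and first-order terms cancel, and bounds the remainder by a sum of squares, $4L(\|u\|_2^2+\|v\|_2^2)\le L((\rho C_2\gamma^t)^2+(\tau^t)^2)$. After dividing by $\tau^t\gamma^t\rho$, this is uniformly bounded only if $\tau^t$ is proportional to $\gamma^t$. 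That is why the paper imposes $\tau^t=\rho C_2\gamma^t$, and it arrives at $\hat L=2C_2L/c_{2*}$.

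Your integral representation bounds the same difference by the product $4L\|u\|_2\|v\|_2$, which by AM--GM is at most half of the paper's bound. This buys you several things:
\begin{itemize}
\item You get exactly the constant $C_2L/c_{2*}$ of the statement, whereas the paper's argument lands at twice it.
\item Boundedness of $L^t$ holds for arbitrary $\tau^t$ and $\gamma^t$, so the coupling is not needed for this lemma. The paper still uses it elsewhere, e.g.\ to telescope $f^{t+1}(x^{t+1})-f^t(x^{t+1})$ in the proof of Theorem \ref{th:converence}.
\item You need only a Lipschitz gradient, not a twice-differentiable $f$ with $\|\nabla^2 f\|_2\le L$, which the Lagrange-remainder argument implicitly assumes.
\item You never use $\|\operatorname{lmo}(g^t(x^t,e^t))\|=\rho$.
\end{itemize}
The paper's route offers nothing extra here beyond exhibiting the $\tau/\gamma$ balance it uses to motivate its choice of $\tau^t$.

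Your remarks that Assumptions \ref{ass:conv} and \ref{ass:lip} are unused, and that $\|e^t\|_2=1$ is essential, apply equally to the paper's proof. One minor point: if $g^t(x^t,e^t)=0$, the LMO's argmin is the whole ball, so the fixed radius $\rho$ alone does not exclude $x^{t+1}=x^t$. It is your fallback convention that covers this case.
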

Theoretically, we need to approximate the factor $\sqrt{LT}$ appearing in the denominator of the optimal stepsize. We therefore aggregate the local smoothness estimates $\eqref{eq:L}$ in the manner of $\texttt{AdaGrad-Norm}$ \citep{duchi2011adaptive}, leading to the following denominator for the stepsize:
\begin{equation*}\label{eq:sum_L}
    S^t = S^{t-1} + L^t,\ \gamma^t \sim \frac{1}{\sqrt{S^t}}.
\end{equation*}

For the approximation of $\Delta$, a precise estimate is typically not necessary. Instead, it is often sufficient to use a lower bound on the objective. Namely, assume that there exists a known constant $\widetilde f$ such that
$\ f(x) \geq \widetilde f\ $ holds for all $x\in\mathbb{R}^d$. Then
$$\Delta = f(x_0) - f(x_*) \leq f(x_0) - \widetilde{f}.$$
Such a bound is readily available in many practical settings: for empirical risk minimization with nonnegative losses, one can take $\widetilde f=0$; similarly, feasibility-type formulations often satisfy $f(x^*)=0$ \citep{boyd2003subgradient}, making the same lower bound valid.
Since our analysis is carried out for the smoothed objective, we also note that the same lower bound is preserved for $f_{\tau}$. Indeed, $f_\tau(x) = \mathbb{E}[f(x+\tau e)] \geq \mathbb{E}[\widetilde{f}]=\widetilde f$.
Therefore, the quantity $ f(x^0)-\widetilde f $ provides a valid upper estimate of the initial gap. Also, $\rho$ serves as a scaling factor for $\operatorname{lmo}$ operator.

Consequently, we use the following stepsize for training:
\begin{tcolorbox}[colback=gray!10,colframe=gray!70,boxrule=0.5pt,arc=2mm,left=2mm,right=2mm,top=1mm,bottom=1mm]
\begin{equation*}\label{eq:gamma}
    \gamma^t = \frac{\sqrt{f(x^0)-\widetilde{f}}}{\rho \sqrt{\sum_{i=0}^{t-1}L^i}}.
\end{equation*}
\end{tcolorbox}

Next, we turn to the choice of the smoothing parameter $\tau$. Rather than introducing an additional tuning rule, we choose $\tau^t$ directly from the stability requirement that appears in the proof of Lemma~\ref{lem:L}. In particular, the analysis requires controlling the relation between the stepsize $\gamma^t$ and the smoothing radius $\tau^t$. The key bound takes the form (see Lemma \ref{app:lem_L} for details)
$$ L^t \leq L\frac{(\rho C_2\gamma^t)^2 + (\tau^t)^2}{ \tau^t\gamma^t}, $$
To make this bound uniform, the two terms in the numerator should be balanced, which naturally leads to a linear scaling between $\tau^t$ and $\gamma^t$. We therefore set
\begin{tcolorbox}[colback=gray!10,colframe=gray!70,boxrule=0.5pt,arc=2mm,left=2mm,right=2mm,top=1mm,bottom=1mm]
\begin{equation*}
    \tau^t = \rho C_2 \gamma^t = \frac{C_2\sqrt{f(x^0)-\widetilde{f}}}{\sqrt{\sum_{i=0}^{t-1}L^i}}.
\end{equation*}
\end{tcolorbox}

\paragraph{Algorithm and theoretical analysis.}
We present the main algorithmic contribution of the paper: $\texttt{AdaNAGED}$ ($\texttt{ADAptive Norm-Agnostic Gradient-frEe Descent}$). Using the estimators constructed above, Algorithm \ref{alg:adanaged} samples a random direction $e^t$ (Line \ref{alg:sample}), computes the corresponding ZO gradient (Line \ref{alg:zo-grad1}), provides a model update (Lines \ref{alg:lmo}, \ref{alg:step}), and computes a ZO approximation in a new point (Line \ref{alg:zo-grad2}) to update $\gamma, \tau$ quantities (Line \ref{alg:hyper}). Now we present the main theoretical result.

\begin{algorithm}{\columnwidth}{AdaNAGED}\label{alg:adanaged}
\begin{algorithmic}[1]
    \State {\bf Input:} $x^0\in\mathbb{R}^d$, $S^{-1} = \xi, C_2, \gamma^{0} = \frac{\sqrt{f(x^0) - \widetilde f}}{\rho\sqrt{S^{-1}}}, \tau^0 = \rho {C_2} \gamma^0, \xi$ is an arbitrary small value
    \For{$ t = 0, \ldots, T-1$} 
        \State $e^{t} \sim S^{d-1}_{\|\cdot\|_2}$ \graycomment{Sample random seed } \label{alg:sample} 
        \State $g^t(x^{t},e^{t}) = (f(x^t+\tau^t e^t)-f(x^t))\frac{e^t}{\tau^t}$ \graycomment{Compute ZO-gradient}\label{alg:zo-grad1} 
        \State $v^t = \operatorname{lmo}(g^t(x^t,e^t))$ \graycomment{Compute LMO }\label{alg:lmo} 
        \State $x^{t+1} = x^t + \gamma^tv^t $ \graycomment{Provide model update}\label{alg:step}  
        \State $g^t(x^{t+1},e^t) = (f(x^{t+1}+\tau^te^t)-f(x^{t+1}))\frac{e^t}{\tau^t} $ \graycomment{Compute ZO-gradient in new point }\label{alg:zo-grad2} 
        \State $L^t = \frac{\|g^t(x^{t+1},e^t) - g^t(x^{t},e^t)\|_*}{\|x^{t+1}-x^t\|} $ \graycomment{Approximate $L$} \label{alg:L} 
        \State $S^t = S^{t-1} + L^t, \gamma^{t+1} = \frac{\sqrt{f(x^0)-\widetilde{f}}}{\rho\sqrt{S^t}},\tau^{t+1} = \rho C_2\gamma^{t+1} $ \graycomment{Update parameters}\label{alg:hyper} 
    \EndFor
    \State \Return \( x_T \)
\end{algorithmic}
\end{algorithm}

\begin{theorem}\label{th:converence}
    Suppose Assumptions \ref{ass:smooth}, \ref{ass:conv}, \ref{ass:lip} hold. Then Algorithm \ref{alg:adanaged} to achieve $\varepsilon$-accuracy needs
    \begin{equation*}
     T = \mathcal{O}\left(\frac{{C_2^3\Delta L^3}}{c_{2*}^3\varepsilon^2}\left(1+\frac{M^2C_2^2}{\Delta{L^0}}\right) \mathbb{E}\left[\frac{1}{L^0}\right]^2\mathbb{E}\left[\log \frac{TL}{L^0}\right]^2
    + MC_{2*}\right) \text{~~iterations,}
\end{equation*}
where $\displaystyle\varepsilon = \mathbb{E}\left[\sum_{t=0}^{T-1}\frac{\gamma^t}{\sum_{i=0}^{T-1}\gamma^i}\|\nabla f^t(x^t)\|_*\right]$, and $\Delta=f(x^0)-\widetilde{f}$.
\end{theorem}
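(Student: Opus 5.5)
The plan is a descent argument on the smoothed objectives $f^t := f_{\tau^t}$, with stepsizes $\gamma^t$ that adapt along the trajectory. Two facts about $f_\tau$ will be used throughout. First, it is $L$-smooth in $\|\cdot\|_2$. Second, under Assumption~\ref{ass:lip} it satisfies $|f_\tau(x)-f(x)|\le M\tau C_2$, up to the norm-equivalence factor relating $\|e\|$ to $\|e\|_2=1$. Moreover, $\mathbb{E}_e[g_\tau(x,e)]=\nabla f_\tau(x)$ up to the dimensional normalization of the sphere estimator, which I absorb into the constants as the paper does.

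\textbf{Step 1: one-step inequality.} For one step $x^{t+1}=x^t+\gamma^t v^t$ with $\|v^t\|\le\rho$, smoothness of $f^t$ gives
\begin{equation*}
f^t(x^{t+1})\le f^t(x^t)+\gamma^t\langle \nabla f^t(x^t),v^t\rangle+\tfrac{L}{2}C_2^2\rho^2(\gamma^t)^2 .
\end{equation*}
I split $\langle \nabla f^t(x^t),v^t\rangle=\langle g^t(x^t,e^t),v^t\rangle+\langle \nabla f^t(x^t)-g^t(x^t,e^t),v^t\rangle$. The first term equals $-\rho\|g^t(x^t,e^t)\|_*$ by definition of the lmo.

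\textbf{Step 2: handling the correlation between $\gamma^t$ and $e^t$.} The stepsize $\gamma^t$ is $\mathcal F_{t-1}$-measurable, since $S^{t-1}$ only involves $L^0,\dots,L^{t-1}$. Hence conditioning on $\mathcal F_{t-1}$ and using Jensen for $\|\cdot\|_*$ gives $\mathbb{E}_t\|g^t(x^t,e^t)\|_*\ge\|\nabla f^t(x^t)\|_*$. The error term is therefore harmless in expectation, and
\begin{equation*}
\gamma^t\rho\,\mathbb{E}\|\nabla f^t(x^t)\|_*\le \mathbb{E}[f^t(x^t)-f^t(x^{t+1})]+\tfrac{L}{2}C_2^2\rho^2\mathbb{E}(\gamma^t)^2 .
\end{equation*}

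\textbf{Step 3: telescoping across changing smoothing radii.} Telescoping is complicated because $f^t\neq f^{t+1}$. I bound $f^{t+1}(x^{t+1})-f^t(x^{t+1})$ by $MC_2|\tau^{t+1}-\tau^t|$. Since $\tau^t=\rho C_2\gamma^t$ is nonincreasing, these increments sum to at most $MC_2\tau^0$. Combined with $f^0(x^0)-\inf f\le\Delta+MC_2\tau^0$, this gives a left side bounded by $\Delta+2MC_2^2\rho\gamma^0$ plus $\tfrac{L}{2}C_2^2\rho^2\sum_t(\gamma^t)^2$.

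\textbf{Step 4: controlling $\sum_t(\gamma^t)^2$.} We have
\begin{equation*}
\sum_t(\gamma^t)^2=\frac{\Delta}{\rho^2}\sum_t\frac{1}{S^{t-1}} .
\end{equation*}
By Lemma~\ref{lem:L}, each $L^i\le \frac{C_2}{c_{2*}}L$, and the comparison $S^{t-1}\ge L^0+\dots$ gives an AdaGrad-type bound of $\sum_t\frac{L^t}{S^{t-1}}\lesssim\log\frac{TL}{L^0}$. However, we need $\sum_t\frac{1}{S^{t-1}}$, not $\sum_t\frac{L^t}{S^{t-1}}$. I would therefore write $\frac{1}{S^{t-1}}\le\frac{1}{L^0}\cdot\frac{L^{t}}{S^{t-1}}\cdot\frac{C_2L}{c_{2*}L^t}$. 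This is where I expect the main obstacle, since $L^t$ can be small. The fallback is the cruder route $S^{t-1}\ge L^0$ on the numerator side together with the log sum, which is consistent with the stated factor $\mathbb{E}[1/L^0]^2\mathbb{E}[\log\frac{TL}{L^0}]^2$. Independence issues are handled by Cauchy--Schwarz, which explains the products of expectations.

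\textbf{Step 5: conversion to $\varepsilon$ and solving for $T$.} Dividing by $\rho\sum_t\gamma^t$ yields the weighted average $\varepsilon$. For the denominator I use the lower bound
\begin{equation*}
\gamma^t\ge\frac{\sqrt{\Delta}}{\rho\sqrt{\xi+tC_2L/c_{2*}}},
\end{equation*}
so that $\sum_t\gamma^t\gtrsim\sqrt{\Delta T c_{2*}/(C_2L)}/\rho$. Plugging in gives
\begin{equation*}
\varepsilon\lesssim\sqrt{\frac{C_2L}{c_{2*}T\Delta}}\Bigl(\Delta+\frac{MC_2^2\sqrt\Delta}{\sqrt{L^0}}+L C_2^2\Delta\,\mathbb{E}\bigl[\tfrac1{L^0}\bigr]\log\tfrac{TL}{L^0}\Bigr).
\end{equation*}
Squaring, using $(a+b)^2\le2a^2+2b^2$, and solving for $T$ gives the claimed iteration count. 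The additive $MC_{2*}$ term comes from the regime where the initial smoothing bias dominates.
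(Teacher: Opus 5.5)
\textbf{Gap 1 (Step~2).} You discard a term that does not vanish in expectation. Because $v^t=\operatorname{lmo}(g^t(x^t,e^t))$ is a nonlinear function of $e^t$, the cross term $\gamma^t\langle\nabla f^t(x^t)-g^t(x^t,e^t),v^t\rangle$ is not mean zero given $\mathcal{F}_{t-1}$. Jensen's bound $\mathbb{E}_t\|g^t(x^t,e^t)\|_*\ge\|\nabla f^t(x^t)\|_*$ handles only the other piece. In fact the two pieces together equal $\gamma^t\langle\nabla f^t(x^t),\mathbb{E}_t v^t\rangle$. Since $\|\mathbb{E}_t v^t\|\le\rho$, this is always $\ge-\gamma^t\rho\|\nabla f^t(x^t)\|_*$. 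So the inequality you need points the wrong way unless $\mathbb{E}_t v^t=\operatorname{lmo}(\nabla f^t(x^t))$, which fails for a single random direction. The paper never tries to average such terms out. Via the triangle inequality it pays $\rho\gamma^t\|g^t(x^t,e^t)-\nabla f^t(x^t)\|_*$ (and the analogous term at $x^{t+1}$), and bounds each pathwise by $2MC_{2*}$. This is exactly the origin of the non-vanishing $MC_{2*}$ term in the theorem. Your attribution of that term to the initial smoothing bias is incorrect: the smoothing-radius terms produce the multiplicative factor $1+\frac{M^2C_2^2}{\Delta L^0}$. Your own final display for $\varepsilon$ also contains no additive constant at all, which signals that something was dropped.

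\textbf{Gap 2 (Steps~1 and~4).} These steps cannot be closed either. With global smoothness the quadratic term is $\frac{L}{2}C_2^2\rho^2\sum_t(\gamma^t)^2=\frac{LC_2^2\Delta}{2}\sum_t\frac{1}{S^{t-1}}$, while Lemma~\ref{lem:L} gives only an upper bound on $L^t$. Nothing rules out the local estimates being tiny after the first step. In that case $\gamma^t$ stops decaying, $\sum_t(\gamma^t)^2/\sum_t\gamma^t$ stays of order $\gamma^1$, and $\varepsilon$ does not tend to zero. Your fallback $S^{t-1}\ge L^0$ gives $\sum_t 1/S^{t-1}\le T/L^0$, which is linear in $T$, not a logarithm.

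The missing idea is to make $L^t$ itself appear in the quadratic term. Note that your argument never uses convexity of $f$, and that is precisely its role in the paper. Convexity gives $f^t(x^{t+1})-f^t(x^t)\le\langle\nabla f^t(x^{t+1}),x^{t+1}-x^t\rangle$. One then inserts $g^t(x^{t+1},e^t)$ and $g^t(x^t,e^t)$, so that $\langle g^t(x^{t+1},e^t)-g^t(x^t,e^t),x^{t+1}-x^t\rangle\le L^t\|x^{t+1}-x^t\|^2=L^t\rho^2(\gamma^t)^2$ by the very definition of $L^t$. The resulting sum $\sum_t L^t(\gamma^t)^2=A^2\sum_t L^t/S^{t-1}$, with $A=\sqrt{\Delta}/\rho$, is the AdaGrad-type quantity. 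Lemma~\ref{app:num_lem} bounds it by $2A^2\hat L/L^0+4A^2\log\frac{T\hat L}{L^0}$ using only $L^t\le\hat L$. The price is a second error term $\|\nabla f^t(x^{t+1})-g^t(x^{t+1},e^t)\|_*$ at the new point. It is again bounded by $2MC_{2*}$ rather than averaged out, since $x^{t+1}$ depends on $e^t$.

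\textbf{What is fine.} Your Step~3 (telescoping $\tau^t-\tau^{t+1}$ pathwise to $\tau^0$) and the lower bound on $\sum_t\gamma^t$ in Step~5 are fine. Step~3 is in fact simpler than the paper's conversion of $\gamma^t-\gamma^{t+1}$ into an $L^t(\gamma^t)^2$ term.
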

\textbf{Discussion.} Although the objective is convex, we formulate the guarantee as convergence to an $\varepsilon$-stationary point of the smoothed objective, which is the standard form of convergence for LMO-based methods \citep{pethick2025training}. The bound preserves the optimal asymptotic dependence $\mathcal{O}(\nicefrac{1}{\varepsilon^2})$ \citep{ghadimi2013stochastic,nesterov2017random}, while incurring an additional factor $\mathcal{O}((\nicefrac{M^2 L^2}{\Delta(L^0)^3}))$ from using parameter-free estimates instead of the optimal stepsize. The remaining non-vanishing term comes from the stochasticity of randomized finite-difference directions, which is standard in LMO analysis \citep{bernstein2018signsgd, veprikov2024new}.

\subsection{Results for different optimizers}\label{sec:res_for_dif}
Now we demonstrate how the choice of a particular norm leads to different optimizers. By choosing $\|\cdot\|=\|\cdot\|_\infty, \|\cdot\|_*=\|\cdot\|_1$, Algorithm \ref{alg:adanaged} transforms into a \texttt{Sign-SGD}-like method that follows both the ZO and PF paradigms.
In this case, we can derive exact values for equivalence factors: $c_2 = 1,C_2=\sqrt{d},c_{2*}=\frac{1}{\sqrt{d}},C_{2*}=1. $

\begin{corollary}
    Suppose Assumptions \ref{ass:smooth}, \ref{ass:conv}, \ref{ass:lip} hold. Then Algorithm \ref{alg:adanaged} to achieve $\varepsilon$-accuracy needs
    \begin{equation*}
     T = \mathcal{O}\left(\frac{\Delta {L^3d^3}}{\varepsilon^2}\left(1+\frac{M^2{d}}{\Delta{L^0}}\right)  \mathbb{E}\left[\frac{1}{L^0}\right]^2\mathbb{E}\left[\log \frac{TLd}{L^0}\right]^2
    + M\right) \text{~~iterations,}
\end{equation*}
where $\displaystyle\varepsilon = \mathbb{E}\left[\sum_{t=0}^{T-1}\frac{\gamma^t}{\sum_{i=0}^{T-1}\gamma^i}\|\nabla f^t(x^t)\|_1\right],$ and $\Delta=f^0(x^0)-\widetilde{f}$.
\end{corollary}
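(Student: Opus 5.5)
The corollary is a direct specialization of Theorem~\ref{th:converence}, so the plan is to substitute the norm-equivalence constants for the pair $\|\cdot\|=\|\cdot\|_\infty$, $\|\cdot\|_*=\|\cdot\|_1$ and simplify. No new analysis of Algorithm~\ref{alg:adanaged} is needed. The only ingredients are the equivalence factors and, inside the logarithm, the bound of Lemma~\ref{lem:L}.

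\textbf{Step 1: the equivalence factors.} For any $v\in\mathbb{R}^d$ we have $\|v\|_\infty\le\|v\|_2\le\sqrt d\,\|v\|_\infty$, which gives $c_2=1$ and $C_2=\sqrt d$. We also have $\tfrac{1}{\sqrt d}\|v\|_1\le\|v\|_2\le\|v\|_1$, which gives $c_{2*}=1/\sqrt d$ and $C_{2*}=1$. Each inequality is tight on either a standard basis vector or the all-ones vector, and both follow from Cauchy--Schwarz.

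\textbf{Step 2: substitute into the leading factor.} Plugging in gives
\[
\frac{C_2^3}{c_{2*}^3}=d^{3/2}\cdot d^{3/2}=d^3 .
\]
The correction term becomes $\frac{M^2C_2^2}{\Delta L^0}=\frac{M^2 d}{\Delta L^0}$. The additive term becomes $MC_{2*}=M$. By Lemma~\ref{lem:L}, every local estimate satisfies $L^t\le \frac{C_2}{c_{2*}}L=dL$. This bound was the source of the factor $L$ in the argument $\frac{TL}{L^0}$ of the logarithm. When the constants are kept explicitly, that argument becomes $\frac{TLd}{L^0}$, which is where the $d$ inside the logarithm of the corollary comes from. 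It changes the bound only by a lower-order term.

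\textbf{Step 3: check the accuracy measure and $\Delta$.} In this case the dual norm in the definition of $\varepsilon$ is $\|\cdot\|_1$, so the weighted average of $\|\nabla f^t(x^t)\|_1$ is exactly the theorem's criterion. The quantity $\Delta$ is unchanged; $f^0(x^0)$ versus $f(x^0)$ only reflects the notation for the smoothed objective at $\tau^0$. Within the $\mathcal{O}(\cdot)$ one may use either, because the lower-bound argument $f_\tau\ge\widetilde f$ applies to both.

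\textbf{Main obstacle.} There is essentially no difficulty. The one point needing care is Step 2's logarithm: the theorem as stated writes $\log\frac{TL}{L^0}$, so I would reopen the step in the theorem's proof where $\sum_t L^t$ is bounded using Lemma~\ref{lem:L}. There I would confirm that the bound is $T\frac{C_2}{c_{2*}}L$, so that the dimension factor $d$ legitimately appears inside the logarithm.
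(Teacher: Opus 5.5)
Your proposal is correct and follows the paper's route exactly: the corollary is just Theorem~\ref{th:converence} with $c_2=1$, $C_2=\sqrt d$, $c_{2*}=1/\sqrt d$, $C_{2*}=1$ substituted, and $\hat L=\frac{C_2}{c_{2*}}L=dL$ (from Lemma~\ref{lem:L}) supplies the $d$ inside the logarithm. One small correction to Step~3: $f^0(x^0)$ and $f(x^0)$ are genuinely different quantities, not a notational variant, and stating the bound with $\Delta=f^0(x^0)-\widetilde f$ is legitimate not because both exceed $\widetilde f$, but because $f^0(x^0)\ge f(x^0)$ by Jensen (convexity and $\mathbb{E}e=0$) while the bound is increasing in $\Delta$, since $\Delta\bigl(1+\frac{M^2d}{\Delta L^0}\bigr)=\Delta+\frac{M^2d}{L^0}$.
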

This rate is the same as in Theorem \ref{th:converence}, but it explicitly clarifies the constants induced by the particular choice of norm. In this case, the norm-dependent factors yield a $d^4$ dependence in the leading term. Such dimensional dependence is intrinsic to ZO optimization \citep{ghadimi2013stochastic}. Moreover, a comparable dependence is observed in the ZO LMO analysis in \citep{veprikov2024new}, indicating that the dimensional factors in our bound are consistent with existing results.

Next, we consider the matrix domain and select $\|\cdot\|=\|\cdot\|_{sp}, \|\cdot\|_*=\|\cdot\|_{nuc}$, the spectral matrix norm, and the nuclear matrix norm, respectively. We aim to recover a \texttt{Muon}-like method with this choice of norm. However, the LMO in the chosen norm results in a polar operator derived from SVD (with $\rho=1$):
\begin{equation*}\label{eq:polar}
    \operatorname{lmo}(X) = -\texttt{Polar}(X) = -UV^T,
\end{equation*}
where $X=U\Sigma V^T$ is SVD decomposition. This creates a practical implementation challenge due to the cost of computing the SVD. To address this issue, Newton–Schulz steps (N-SS) have been proposed as an efficient alternative \citep{kim2026convergence}. At their core, N-SS apply a matrix polynomial transformation that converges to the polar operator. Since this procedure avoids matrix inversion, it substantially reduces computational cost. Moreover, the approximation error becomes negligible after only a few iterations due to its double-exponential convergence.
Accordingly, we modify Algorithm \ref{alg:adanaged} by replacing exact LMO operator (Line \ref{alg:lmo}) with $q$ steps of Newton-Schultz algorithm. Thus, we obtain \texttt{AdaMuGED} (\texttt{ADAptive MUon Gradient-frEe Descent}) --- Algorithm \ref{alg:adamuged}. The full description of the algorithm can be found in the appendix \ref{sec:app_proofs_muon}.

\begin{algorithm}{\columnwidth}{AdaMuGED}\label{alg:adamuged}
\begin{algorithmic}[1]
    \State {\bf Input:} $X^0\in\mathbb{R}^d$, $S^{-1} = \xi, C_2, \gamma^{0} = \frac{\sqrt{f(x^0) - \widetilde f}}{\rho\sqrt{S^{-1}}}, \tau^0 = {C_2} \gamma^0, \xi$ is an arbitrary small value
    \For{$ t = 0, \ldots, T-1$} 
        \State $E^{t} \sim S^{d-1}_{\|\cdot\|_2}$ \graycomment{Sample random seed }\label{alg2:sample} 
        \State $G^t(X^{t},E^{t}) = (f(X^t+\tau^t E^t)-f(X^t))\frac{E^t}{\tau^t}$ \graycomment{Compute ZO-gradient}\label{alg2:zo-grad1} 
        \State $V^t \leftarrow \texttt{N-SS}(G^t(X^t,E^t),q)$ \graycomment{Compute LMO with N-SS} \label{alg3:n-s} 
        \State $X^{t+1} = X^t + \gamma^tV^t $ \graycomment{Provide model update}\label{alg2:step}  
        \State $G^t(x^{t+1},E^t) = (f(X^{t+1}+\tau^te^t)-f(X^{t+1}))\frac{E^t}{\tau^t} $ \graycomment{Compute ZO-gradient in new point }\label{alg2:zo-grad2} 
        \State $L^t = \frac{\|G^t(X^{t+1},E^t) - G^t(X^{t},E^t)\|_*}{\|X^{t+1}-X^t\|} $ \graycomment{Approximate $L$} \label{alg2:L} 
        \State $S^t = S^{t-1} + L^t, \gamma^{t+1} = \frac{\sqrt{f(X^0)-\widetilde{f}}}{\rho\sqrt{S^t}},\tau^{t+1} = C_2\gamma^{t+1} $ \graycomment{Update parameters}\label{alg2:hyper} 
\EndFor
\State \Return \( x_T \)
\end{algorithmic}
\end{algorithm}
Here, \texttt{N-SS}$(A,q)$ denotes applying $q$ steps of the Newton-Schultz algorithm to the matrix $A.$

The algorithm \ref{alg:adamuged} converges according to the following Theorem. The proof can be found in the appendix \ref{app:th:conv_muon}.
\begin{theorem}\label{th:conv_muon}
    Suppose Assumptions \ref{ass:smooth}, \ref{ass:conv}, \ref{ass:lip} hold. Then Algorithm \ref{alg:adamuged} to achieve $\varepsilon$-accuracy needs
    \begin{eqnarray*}
     T = \mathcal{O}\left(\chi_q^5\left(\frac{C_2^3\Delta {L}^3}{c_{2*}^3\varepsilon^2}\left(1+\frac{M^2C_2^2}{\Delta{L^0}}\right) \mathbb{E}\left[\frac{1}{L^0}\right]^2\mathbb{E}\left[\log \frac{T{L}}{L^0}\right]^2\right)
    +\chi_q^2 MC_{2*}\right),
    \end{eqnarray*}
iterations, where $\displaystyle\varepsilon = \mathbb{E}\left[\sum_{t=0}^{T-1}\frac{\gamma^t}{\sum_{i=0}^{T-1}\gamma^i}\|\nabla f^t(x^t)\|_*\right],\ \Delta = f(x^0)-\widetilde{f}, $ and $\chi_q=\frac{1}{1-\varepsilon_q}$.
\end{theorem}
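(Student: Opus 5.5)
The plan is to reduce Theorem~\ref{th:conv_muon} to the proof of Theorem~\ref{th:converence}, treating the Newton--Schulz output as an \emph{inexact} LMO whose defects are measured by $\varepsilon_q$. I will use two standard properties of $q$ Newton--Schulz steps applied to $G=G^t(X^t,E^t)$, properly normalized. First, the output has bounded size, $\|V^t\|_{sp}\le 1+\varepsilon_q\le \chi_q$. Second, it has a descent-direction guarantee $\langle G, V^t\rangle \le -(1-\varepsilon_q)\|G\|_{nuc}$. Equivalently, $\|V^t - \operatorname{lmo}(G)\|_{sp}\le \varepsilon_q$. These facts I take from the cited N-SS convergence results \citep{kim2026convergence}, which define $\varepsilon_q$. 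The rest of the proof then consists of tracking $\chi_q=1/(1-\varepsilon_q)$ through each step where the exact LMO was used.

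First, I redo Lemma~\ref{lem:L} for the inexact step. The key bound there was
\[
L^t\le L\,\frac{(C_2\|X^{t+1}-X^t\|)^2+(\tau^t)^2}{\tau^t\|X^{t+1}-X^t\|}.
\]
Now $\|X^{t+1}-X^t\|=\gamma^t\|V^t\|$, which lies between roughly $\gamma^t/\chi_q$ and $\chi_q\gamma^t$. With $\tau^t=C_2\gamma^t$, the ratio is then at most of order $\chi_q$, so $L^t\le \chi_q\frac{C_2}{c_{2*}}L$ up to constants. This inflates the upper bound on $S^t$ by one factor of $\chi_q$. The matching lower bound on $\gamma^t$ changes accordingly, while the upper bound $\gamma^t\le \sqrt{\Delta}/\sqrt{L^0}$ is unchanged.

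Second, I repeat the one-step descent inequality for $f_{\tau^t}$ using $L$-smoothness of the smoothed function:
\[
f_{\tau^t}(X^{t+1})\le f_{\tau^t}(X^t)+\gamma^t\langle\nabla f_{\tau^t}(X^t),V^t\rangle+\tfrac{L}{2}C_2^2(\gamma^t)^2\chi_q^2 .
\]
For the inner product I split $\nabla f_{\tau^t}=G+(\nabla f_{\tau^t}-G)$. The first part gives $-(1-\varepsilon_q)\|G\|_{nuc}$, which yields $\chi_q^{-1}$ in front of the progress term. The second part is bounded by $\chi_q\|\nabla f_{\tau^t}-G\|_*$. Its expectation is controlled exactly as in Theorem~\ref{th:converence}, through the $M C_{2*}$ stochastic term: the bound $\mathbb{E}\|G\|_*$ versus $\|\nabla f_\tau\|_*$ uses $M$-Lipschitzness. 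Passing from $\|G\|_*$ back to $\|\nabla f^t\|_*$ costs another $\chi_q$ factor on the noise term, giving the $\chi_q^2 MC_{2*}$ part.

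I also need to handle the drift $f_{\tau^{t+1}}$ versus $f_{\tau^t}$. Here $|f_{\tau}-f_{\tau'}|\le M|\tau-\tau'|$, and the monotone decrease of $\tau^t$ makes this telescope. I then sum with weights $\gamma^t$, divide by $\sum\gamma^t$, and insert the lower bound $\sum\gamma^t\gtrsim T\gamma^{T-1}$ built from the inflated bound on $S^{T-1}$. The $\log(TL/L^0)$ factors come from the AdaGrad-type sum $\sum L^t/S^t$ exactly as before.

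The main obstacle is the bookkeeping of powers of $\chi_q$ so that the leading term ends up exactly $\chi_q^5$. The sources are as follows. One factor comes from dividing out the progress coefficient $(1-\varepsilon_q)$. Two come from $\|V^t\|^2$ in the quadratic term. The bound on $L^t$ enters through $\sqrt{S^{T-1}}$ and $L^3$-type terms, and it contributes the rest once the resulting inequality is squared to solve for $T$ (the rate is $T\propto\varepsilon^{-2}$). I would first derive the bound $\varepsilon\lesssim \chi_q^{a}\,(\cdots)/\sqrt{T}+\chi_q^2MC_{2*}/T^{\cdot}$ with a generic exponent $a$, and only then square to read off $T$. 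I would accept crude bounds such as $\chi_q^{k}\le\chi_q^{5}$ wherever exact tracking becomes unwieldy.
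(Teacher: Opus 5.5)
\textbf{There is a genuine gap in your second step.} You bound the curvature term with the \emph{global} constant, getting $\frac{L}{2}C_2^2\chi_q^2(\gamma^t)^2$. You then say the logarithm comes from the AdaGrad-type sum $\sum_t L^t/S^{t-1}$ ``exactly as before''. Your inequality does not produce that sum. Summed, your term is $\frac{L}{2}C_2^2\chi_q^2\sum_t(\gamma^t)^2=\frac{L}{2}C_2^2\chi_q^2\Delta\sum_t 1/S^{t-1}$, with no $L^t$ in the numerator.

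Lemma~\ref{lem:L} gives only an \emph{upper} bound on $L^t$. The estimator is a mixed second difference of $f$ along $V^t$ and the random direction $E^t$, and nothing prevents it from being much smaller than $L$, or zero. So $S^t$ need not grow, and $\gamma^t$ can stay near $\gamma^1=\sqrt{\Delta/L^0}$. After dividing by $\sum_t\gamma^t$, your term is then only bounded by $\frac{L}{2}C_2^2\chi_q^2\max_t\gamma^t$, which does not decay in $T$. Combining the worst-case bounds separately makes it worse: $\sum_t(\gamma^t)^2\le T\Delta/L^0$ together with your $\sum_t\gamma^t\gtrsim\sqrt{\Delta T/\hat L}$ gives a bound growing like $\sqrt{T}$. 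No inequality of the form $L\le c\,L^t$ is available to repair this, so the stated $\varepsilon^{-2}$ rate does not follow from your route.

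\textbf{The missing idea} is the one that makes the stepsize parameter-free: the descent inequality must contain the local estimator itself, not $L$. The paper obtains this from convexity of $f^t$, which you never use:
\[
f^t(X^{t+1})-f^t(X^t)\le\langle\nabla f^t(X^{t+1}),X^{t+1}-X^t\rangle .
\]
It then splits
\[
\nabla f^t(X^{t+1})=G^t(X^t,E^t)+\bigl[\nabla f^t(X^{t+1})-G^t(X^{t+1},E^t)\bigr]+\bigl[G^t(X^{t+1},E^t)-G^t(X^t,E^t)\bigr].
\]
Each piece plays a separate role:
\begin{itemize}
\item The first piece gives the inexact-LMO progress $-(1-\varepsilon_q)\gamma^t\|G^t(X^t,E^t)\|_*$, via $\|V^t-\texttt{Polar}(G^t)\|\le\varepsilon_q$.
\item The last piece gives exactly $L^t\|X^{t+1}-X^t\|^2\le 4L^t(\gamma^t)^2=4\Delta L^t/S^{t-1}$ (with $\rho=1$). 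By Lemma~\ref{app:num_lem}, this sums to the $\hat L/L^0+\log(T\hat L/L^0)$ factor.
\item The middle piece is the price: a second noise term evaluated at $X^{t+1}$, which the paper bounds crudely by $2MC_{2*}$. Together with the noise at $X^t$, it produces the non-vanishing $MC_{2*}$ part.
\end{itemize}

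Your other ingredients are compatible with the paper's argument once this step is replaced. These are the inexact-LMO properties derived from $\varepsilon_q$, the $\chi_q$-inflated bound on $L^t$, and telescoping the smoothing drift via $|f_\tau-f_{\tau'}|\le M|\tau-\tau'|$, which is slightly cleaner than the paper's treatment. One small correction: $\|V^t-\operatorname{lmo}(G)\|\le\varepsilon_q$ implies your two Newton--Schulz properties but is not equivalent to them.
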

The convergence guarantee mirrors the Theorem \ref{th:converence} only introducing extra $\chi_q^5$ factor. As previously discussed, this factor is insignificant in practice.

\section{Experiments}\label{sec:experiments}

We empirically evaluate the proposed parameter-free ZO methods on large-scale language-model fine-tuning.
Our goal is to assess whether the proposed adaptive rules can retain the performance of tuned ZO optimizers while reducing reliance on task-specific hyperparameter search.

We fine-tune the pretrained OPT-1.3B model~\citep{zhang2022opt} on the SST-2 sentiment classification task~\citep{socher2013recursive}. 
This setting is representative of memory-constrained LLM fine-tuning: the model is large enough for backpropagation-based training to incur substantial activation, gradient, and optimizer-state memory costs ($\sim 25$ GB), while SST-2 is a standard benchmark for evaluating downstream adaptation of language models.
All methods are run for $20{,}000$ iterations, and we report the best accuracy achieved during training.

We compare against three tuned zeroth-order baselines: \texttt{ZO-AdaMM}~\citep{chen2019zo}, \texttt{ZO-SignSGD}~\citep{liu2019signsgd}, and \texttt{ZO-Muon}~\citep{petrov2025leveraging}. 
For our method, we evaluate two instantiations. 
First, \texttt{AdaNAGED} uses the $\ell_\infty$ geometry, yielding a SignSGD-like LMO update. 
Second, \texttt{AdaMuGED} uses the matrix geometry described in Section~\ref{sec:res_for_dif}, resulting in a Muon-like variant based on the Newton--Schulz approximation of the polar operator.
Additional implementation details and baseline hyperparameters are provided in Appendix~\ref{sec:appendix_experiments}.

\begin{table}[ht]{\columnwidth}
\centering
\begin{tabular}{c c}
\toprule
Method & Best Accuracy \\
\midrule
    \texttt{ZO-AdaMM} & 0.912 \\
    \texttt{ZO-SignSGD} & 0.910 \\
    \texttt{ZO-Muon} & 0.921 \\
    \highlightrow\texttt{AdaNAGED} & 0.903 \\
    \highlightrow\texttt{AdaMuGED} & 0.908 \\
\bottomrule
\end{tabular}
\caption{Results for fine-tuning OPT-1.3B on SST-2 task.}\label{tab:exp_res}
\end{table}

Table~\ref{tab:exp_res} shows that the proposed parameter-free methods remain competitive with tuned ZO optimizers. 
\texttt{AdaNAGED} reaches $0.903$ accuracy, which is within $0.7$ percentage points of tuned \texttt{ZO-SignSGD} and within $0.9$ percentage points of tuned \texttt{ZO-AdaMM}. 
The Muon-like variant further improves the result: \texttt{AdaMuGED} obtains $0.908$, reducing the gap to \texttt{ZO-SignSGD} to only $0.2$ percentage points and improving over \texttt{AdaNAGED} by $0.5$ percentage points.

The strongest baseline is \texttt{ZO-Muon}, which achieves $0.921$. 
This suggests that matrix-aware update geometry is beneficial for LLM fine-tuning, consistently with the motivation for LMO-based methods. 
At the same time, the relatively small gap between \texttt{AdaMuGED} and the tuned baselines indicates that the proposed parameter-free adaptation can recover much of the performance of manually tuned ZO methods. 
Overall, these results support the main empirical claim of the paper: \textit{parameter-free LMO-based ZO optimization can provide competitive fine-tuning performance while reducing dependence on expensive task-specific hyperparameter tuning}.

\section{Conclusion}
In this paper, we propose the first method for LLM fine-tuning in the intersection of three areas: Zeroth Order, Parameter-Free and Linear Minimization Oracle optimization. Our approach exhibits the benefits of all paradigms, namely, costly gradient computations are avoided, no knowledge about problem constants is assumed, and adaptation to the landscape can be achieved due to the flexible choice of norm. Moreover, we present a modification of the algorithm with Newton-Schultz steps as a more practical variation.

We provide rigorous convergence guarantees for the original method and the modification. Our experiments validate the practical performance of the proposed methods on the established benchmark, showing metrics consistent with the baselines.

\end{mainpart}
\allowdisplaybreaks
\begin{appendixpart}
\section{Experimental Details} \label{sec:appendix_experiments}
The experiments are implemented in Python using the PyTorch library \citep{paszke2019pytorch}, leveraging a single CPU (Intel Xeon 2.20 GHz) and a single GPU (NVIDIA A100) for computation. The total runtime for all experiments is approximately 100 hours.

Then, we report the parameters for the tuned baselines. For all methods we implemented momentum $\beta =0.9$ and ZO-smoothing $\tau=0.001$, and cosine learning-rate scheduler. For the baselines random sampling is performed from the normal distribution ${N}(0,I).$
\begin{itemize}
    \item \texttt{ZO-AdaMM.} learning rate $\gamma=6.0\cdot 10^{-5}$, first and second momentum $(\beta_1,\beta_2)=(0.9, 0.999)$;
    \item \texttt{ZO-SignSGD.} learning rate $\gamma=1.0\cdot 10^{-5}$;
    \item \texttt{ZO-Muon.} learning rate $\gamma=6.0\cdot 10^{-5}$, the number of Newton-Schultz steps $q=5$.
\end{itemize}

\section{Notation and Useful Equations}
In this section we rigorously introduce the notation. $\|\cdot\|$ and $\|\cdot\|_*$ denote an abstract norm and its dual conjugate, respectively. The equivalence factors of the chosen norm and the $2$-norm: 
\begin{align}\label{app:eq:norms}
c_2\|v\|\leq\|v\|_2\leq C_2\|v\|, c_{2*}\|v\|_*\leq\|v\|_2\leq C_{2*}\|v\|_* \ \forall v\in\mathbb{R}^d.    
\end{align}

The LMO operator is defined as follows,
\begin{equation*}
    \operatorname{lmo}(x) = \arg\min_{v\in\mathbb{R}^d,\|v\|\leq \rho} \langle x,v\rangle.
\end{equation*}

We define $f_\tau(x)=\mathbb{E}[f(x+\tau e)]$ as a $\tau$-smoothed objective. For the simplicity of notation, $f^t(x)$ is the $\tau^t$-smoothed function at iteration $t$. The ZO-gradient along the direction $e$ is denoted as
\begin{equation*}
g_\tau(x,e) = \frac{f(x+\tau e) - f(x)}{\tau}e.
\end{equation*}
Notably, $\mathbb{E}[g_\tau(x,e)] = \nabla f_\tau(x), $ where the expectation is taken over the random direction $e$.

Now we additionally introduce some useful equations. Let $x,y,\{a_i\}_{i=1}^n\in\mathbb{R}^d,\ \psi,\xi \in \mathbb{R}_+$ are non-negative random variables,
$\varphi$ satisfies Assumption \ref{ass:conv}. Then
\begin{center}
\begin{align}\displaystyle
    \label{Conv}\tag{Conv} \phi(y) &~\geq \varphi(x) + \left \langle \nabla \varphi(x), y - x\right \rangle , \\
    \label{CS}\tag{CS}\left\|\sum_{i=1}^n a_i\right\|^2 & ~\leq ~n\sum_{i=1}^n\|a_i\|^2, \\
    \label{Conj}\tag{Conj}\langle x,y\rangle &~\leq \|x\|\cdot \|y\|_* ,\\
    \label{Norm}\tag{Norm}\| x+y\| &~\leq \|x\|+ \|y\| ,\\
    \label{Hol}\tag{Hol}\mathbb{E}\left[\psi\xi\right] &~\leq \left(\mathbb{E}\left[\psi^p\right]\right)^{1/p}\left(\mathbb{E}\left[\xi^q\right]\right)^{1/q},\ \frac{1}{p} + \frac{1}{q}=1,
\end{align}
\end{center}
\section{Proofs for Algorithm \ref{alg:adanaged}}\label{sec:app_proofs}
First, we need to prove a Numerical Lemma that is used further.
\begin{lemma}\label{app:num_lem}
    Let $\{a^t\}_{t=0}^T$ be a bounded sequence of positive values $\forall t=1,\dots T:\ a^t\in (0,A). $ Then,
    \begin{eqnarray*}
        \sum_{t=0}^{T-1} \frac{a^{t+1}}{\sum_{i=0}^{t}a^{i}} \leq \frac{2A}{a^0} + 2\log \frac{AT}{a^0}.
    \end{eqnarray*}
\end{lemma}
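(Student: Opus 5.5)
Write $S^t=\sum_{i=0}^{t}a^i$. The sum is of AdaGrad type, so I would compare each ratio $\frac{a^{t+1}}{S^t}$ with the logarithmic increment $\log\frac{S^{t+1}}{S^t}$ and telescope. I split the indices into two classes: the ratio is small, say $a^{t+1}\le S^t$, or it is large, $a^{t+1}> S^t$. I will read the hypothesis as $a^t\in(0,A)$ for every $t=0,\dots,T$, including $a^0$, since the bound contains $a^0$ in the denominator.

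\textbf{Small-ratio indices.} Put $u=\frac{a^{t+1}}{S^t}\in(0,1]$. The elementary inequality $u\le 2\log(1+u)$ holds on $[0,1]$ because $\log(1+u)\ge u/2$ there. This gives
\begin{equation*}
\frac{a^{t+1}}{S^t}\le 2\log\frac{S^t+a^{t+1}}{S^t}=2\log\frac{S^{t+1}}{S^t}.
\end{equation*}
Summing over these indices and extending to all indices (every log term is nonnegative since $S^t$ is increasing), the total telescopes. It is at most $2\log\frac{S^T}{S^0}\le 2\log\frac{(T+1)A}{a^0}$. This matches the stated term $2\log\frac{AT}{a^0}$ up to replacing $T+1$ by $T$. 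That gap can be absorbed, for instance, by bounding $S^T\le a^0+TA$ and using $\log\frac{a^0+TA}{a^0}\le \log\frac{AT}{a^0}+\frac{a^0}{AT}\le\log\frac{AT}{a^0}+1$, with the leftover constant put into the first term; the paper likely treats this loosely.

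\textbf{Large-ratio indices.} If $a^{t+1}>S^t$, then $S^{t+1}=S^t+a^{t+1}>2S^t$, so $S$ at least doubles at each such index. Starting from $S^0=a^0$ and bounded above by $(T+1)A$, there can be at most $\log_2\frac{(T+1)A}{a^0}$ such indices. Each contributes at most $\frac{a^{t+1}}{S^t}\le\frac{A}{S^t}$. Because $S$ doubles along the large indices, these $S^t$ values grow geometrically from at least $a^0$. The contributions therefore sum to at most $\frac{A}{a^0}\sum_k 2^{-k}\le\frac{2A}{a^0}$, which is exactly the first term of the bound.

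\textbf{Main obstacle.} Adding the two cases gives $\frac{2A}{a^0}+2\log\frac{AT}{a^0}$ up to harmless constants. The only delicate point is the bookkeeping in the large-ratio case: the $S^t$ values at successive large indices must be shown to at least double. This follows because $S^{t'}\ge S^{t+1}>2S^t$ for any later large index $t'>t$. One must also make sure the two classes are summed separately and not double counted.
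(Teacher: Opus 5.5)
Your argument is correct. It reaches the same intermediate bound as the paper, $\frac{2A}{a^0}+2\log\frac{S^T}{a^0}$, but it handles the large indices differently.

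The paper makes a single cut at the last index $r$ with $a^r\ge S^{r-1}$.
\begin{itemize}
\item For the prefix $t<r$ it replaces every denominator by $a^0$. That part is then at most $\frac{S^r}{a^0}\le\frac{2a^r}{a^0}\le\frac{2A}{a^0}$, using $S^r=S^{r-1}+a^r\le 2a^r$.
\item For $t\ge r$ every step is small, so $S^t\ge\frac12 S^{t+1}$. This gives $\frac{a^{t+1}}{S^t}\le\frac{2a^{t+1}}{S^{t+1}}\le 2\int_{S^t}^{S^{t+1}}\frac{dx}{x}$, which telescopes. This is your inequality $u\le 2\log(1+u)$ in another form.
\end{itemize}
The paper's prefix-mass bound disposes of all large steps at once, together with any small steps interleaved before $r$, and needs no doubling argument. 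Your index-by-index classification treats every index the same way and does not need to locate a last dominating index. The price is the geometric-series bookkeeping along the large indices.

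On the loose ends:
\begin{itemize}
\item Reading the hypothesis as including $a^0<A$ is necessary, not just convenient. With $T=1$ and $a^0\gg A$ the right-hand side is negative. In the paper's application all $L^t$, including $L^0$, obey the same bound $\hat L$, so this reading is the right one.
\item The gap between $S^T<a^0+TA$ and the stated $AT$ is in the paper too; it simply asserts $S^T\le AT$.
\item Your fix of moving the leftover constant into the first term is not justified as written. Your large-index bound already uses the full $\frac{2A}{a^0}$, so no spare constant is available there.
\end{itemize}
Your own slack closes the gap. If no index is large, the term $\frac{2A}{a^0}>2$ is unused and covers $2\log\bigl(1+\frac{a^0}{AT}\bigr)\le\frac{2a^0}{AT}<2$. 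If some index is large, then extending the logarithmic sum to all indices over-counted at least one increment $2\log\frac{S^{t+1}}{S^t}>2\log 2$. That increment exceeds $\frac{2a^0}{AT}<\frac{2}{T}$ whenever $T\ge 2$. For $T=1$ the claim is immediate, since $\frac{a^1}{a^0}<\frac{A}{a^0}$.
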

\begin{proof}
    Since the sequence is bounded, there exists such index $r$ that $\sum_{i=0}^{r-2} a^i \leq a^{r-1}, $ and $\sum_{i=0}^{t} a^i \geq a^{t-1}$ for any $t\geq r-1.$ We refer to that property as $(\star)$. First, we divide the sum over the $r$ index:
    \begin{eqnarray}\label{app:eq:numlem1}
        \sum_{t=0}^{T-1} \frac{a^{t+1}}{\sum_{i=0}^{t}a^{i}} & = & \sum_{t=0}^{r-1} \frac{a^{t+1}}{\sum_{i=0}^{t}a^{i}} + \sum_{t=r}^{T-1} \frac{a^{t+1}}{\sum_{i=0}^{t}a^{i}}.
    \end{eqnarray}
    Now, we estimate two sums separately, starting with the first one. We trivially bound $\sum_{i=0}^t a^i\geq a^0$ and proceed with
    \begin{eqnarray*}
        \sum_{t=0}^{r-1} \frac{a^{t+1}}{\sum_{i=0}^{t}a^{i}} \leq \sum_{t=0}^{r-1} \frac{a^{t+1}}{a^0}= \frac{1}{a^0}\left(\sum_{t=0}^{r-2} a^{t+1} + a^{r-1} \right)
        \overset{(\star)}{\leq}\frac{1}{a^0}\cdot2a^{r-1}
        \leq \frac{2A}{a^0}.
    \end{eqnarray*}
    Now, we turn our attention to the second sum in \ref{app:eq:numlem1}.
    \begin{eqnarray*}
        \sum_{t=r}^{T-1} \frac{a^{t+1}}{\sum_{i=0}^{t}a^{i}} & = & \sum_{t=r}^{T-1} \frac{a^{t+1}}{\frac{1}{2}\sum_{i=0}^{t}a^{i} + \frac{1}{2}\sum_{i=0}^{t}a^{i}} \\&\overset{(\star)}{\leq} & \sum_{t=r}^{T-1} \frac{a^{t+1}}{\frac{1}{2}\sum_{i=0}^{t}a^{i} + \frac{1}{2}a^{t+1}}.        
        \\&=&\sum_{t=r}^{T-1} \frac{a^{t+1}}{\frac{1}{2}\sum_{i=0}^{t+1}a^{i}}
        \\&\leq&\sum_{t=0}^{T-1} \frac{2a^{t+1}}{\sum_{i=0}^{t+1}a^{i}}.
    \end{eqnarray*}
    Next, we introduce $s^t = \sum_{i=0}^ta^i$ and use it to rewrite terms of the sum:
    \begin{eqnarray*}
        \frac{2a^{t+1}}{\sum_{i=0}^{t+1}a^{i}} = 2\frac{s^{t+1}-s^t}{s^{t+1}}
        = 2\int_{s^t}^{s^{t+1}}\frac{dx}{s^{t+1}}
        \overset{(i)}{\leq} 2\int_{s^t}^{s^{t+1}}\frac{dx}{x}.
    \end{eqnarray*}
    Here, in $(i)$, we used $x \leq s^{t+1}$ in the $(s^t;s^{t+1})$ interval. Then, the integrals compose a telescopic sum:
    \begin{eqnarray*}
        \sum_{t=0}^{T-1} \frac{2a^{t+1}}{\sum_{i=0}^{t+1}a^{i}}\leq \sum_{t=0}^{T-1} 2\int_{s^t}^{s^{t+1}}\frac{dx}{x}= 2\int_{s^0}^{s^{T}}\frac{dx}{x}
        = 2\log \left(\frac{s^T}{s^0}\right)
        \leq  2\log \left(\frac{AT}{a^0}\right).
    \end{eqnarray*}
    Now, plugging the acquired estimates into \ref{app:eq:numlem1}, we achieve 
    \begin{eqnarray}
        \sum_{t=0}^{T-1} \frac{a^{t+1}}{\sum_{i=0}^{t}a^{i}} & = & \frac{2A}{a^0} + 2\log \left(\frac{AT}{a^0}\right).
    \end{eqnarray}
\end{proof}

\begin{lemma}\label{app:lem_L}[\textbf{Lemma \ref{lem:L}}]
    Suppose Assumptions \ref{ass:smooth}, \ref{ass:conv}, and \ref{ass:lip} hold. Then the estimator \eqref{eq:L} is uniformly bounded:
    $$ L^t \leq \frac{C_2}{c_{2*}}L, $$
    where $c_{2*},C_2$ are norm equivalence factors.
\end{lemma}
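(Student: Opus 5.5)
The plan is to bound the numerator of \eqref{eq:L} directly through the smoothness of $f$, without using the specific coupling between $\tau^t$ and $\gamma^t$. Write $\delta = x^{t+1}-x^t = \gamma^t v^t$ and $e=e^t$, $\tau=\tau^t$. By the definition of $g_\tau$, the same direction $e$ appears at both points, so
\begin{equation*}
g^t(x^{t+1},e)-g^t(x^t,e) = \frac{h(x^{t+1})-h(x^t)}{\tau}\, e, \qquad h(u) := f(u+\tau e)-f(u).
\end{equation*}
Taking the dual norm gives $\|g^t(x^{t+1},e)-g^t(x^t,e)\|_* = \frac{|h(x^{t+1})-h(x^t)|}{\tau}\|e\|_*$. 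The whole task therefore reduces to showing that the scalar difference $h(x^{t+1})-h(x^t)$ is of order $\tau\|\delta\|$.

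For this I will use the fundamental theorem of calculus along the segment from $x^t$ to $x^{t+1}$:
\begin{equation*}
h(x^{t+1})-h(x^t) = \int_0^1 \left\langle \nabla f(x^t+s\delta+\tau e)-\nabla f(x^t+s\delta),\ \delta\right\rangle ds .
\end{equation*}
Cauchy--Schwarz in the Euclidean inner product together with Assumption \ref{ass:smooth} bounds the integrand by $L\tau\|e\|_2\|\delta\|_2$. Hence $|h(x^{t+1})-h(x^t)|\le L\tau\|e\|_2\|\delta\|_2$. The factor $\tau$ cancels the $1/\tau$ from the estimator, which is exactly why sampling the same $e^t$ at both points matters. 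This gives
\begin{equation*}
\|g^t(x^{t+1},e)-g^t(x^t,e)\|_* \le L\,\|e\|_2\,\|e\|_*\,\|\delta\|_2 .
\end{equation*}

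It remains to convert these norms using \eqref{app:eq:norms}. Since $e\in S^{d-1}_{\|\cdot\|_2}$, we have $\|e\|_2=1$ and $\|e\|_*\le \|e\|_2/c_{2*}=1/c_{2*}$. Also $\|\delta\|_2\le C_2\|\delta\|$. Dividing by $\|x^{t+1}-x^t\|=\|\delta\|$ then yields $L^t\le \frac{C_2}{c_{2*}}L$, uniformly in $t$ and independently of $\tau^t$, $\gamma^t$ and $\rho$.

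Assumptions \ref{ass:conv} and \ref{ass:lip} are not needed for this route. The only delicate point is the degenerate case $\delta=0$, i.e.\ $v^t=0$, which occurs when the ZO estimate vanishes. In that case the numerator is also zero, and I would adopt the convention $L^t=0$ (or equivalently skip the update), so the bound holds trivially.

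I expect no serious obstacle. The main thing to get right is to avoid the cruder route that Taylor-expands each of the four function values separately. That route produces a bound like $L\frac{(\rho C_2\gamma^t)^2+(\tau^t)^2}{\tau^t\gamma^t}$, which needs the balancing $\tau^t=\rho C_2\gamma^t$ and picks up an extra constant factor. The integral representation of the mixed difference avoids both.
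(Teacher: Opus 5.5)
Your proof is correct, and it takes a genuinely different route from the paper's.

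\textbf{The paper's route.} It writes the mixed difference as $f(z+u+v)-f(z+u-v)-f(z-u+v)+f(z-u-v)$, with $z$ the midpoint, $u=\frac{1}{2}(x^{t+1}-x^t)$ and $v=\frac{\tau^t}{2}e^t$. It then Taylor-expands each of the four values to second order with Lagrange remainders, and bounds each remainder separately by $\frac{1}{2}\|u\pm v\|_2^2L$ using $\|\nabla^2 f\|_2\le L$. Because the four Hessians sit at different points, the bilinear structure is lost. With a common Hessian the second-order terms would collapse to $4u^{\top}\nabla^2 f\,v=\tau^t(x^{t+1}-x^t)^{\top}\nabla^2 f\,e^t$. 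Instead one only gets the sum-of-squares bound $L\bigl((\rho C_2\gamma^t)^2+(\tau^t)^2\bigr)$. After dividing by $\tau^t\gamma^t\rho$, this forces the balancing $\tau^t=\rho C_2\gamma^t$. The constant that actually comes out is $\hat L=2C_2L/c_{2*}$, a factor $2$ weaker than the one claimed in the statement.

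\textbf{What your route buys.} Your integral representation keeps exactly the bilinear term. A single use of the Lipschitz-gradient assumption then gives $L\tau^t\|e^t\|_2\|\delta\|_2$, and the stated constant $C_2/c_{2*}$ follows exactly. It is also more general:
\begin{itemize}
\item You need only a Lipschitz gradient; the paper tacitly assumes twice differentiability in order to use $\nabla^2 f$.
\item Your bound holds for arbitrary $\tau^t,\gamma^t,\rho$.
\item Your bound is independent of $\|v^t\|$. The paper must replace $\|\operatorname{lmo}(g^t(x^t,e^t))\|$ by $\rho$ in the denominator, which is valid only when $g^t(x^t,e^t)\neq 0$. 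In the Newton--Schulz variant it also pays an extra $(1-\varepsilon_q)^{-1}$ in $\hat L$, which your argument avoids.
\end{itemize}

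\textbf{What the paper's route buys.} Only its stated motivation for the rule $\tau^t=\rho C_2\gamma^t$. Your proof shows this lemma does not need that coupling. The coupling is still used elsewhere in the convergence proof, in the smoothing-change term $f^{t+1}(x^{t+1})-f^t(x^{t+1})$ and the initial gap $f^0(x^0)-f(x^0)$, so nothing downstream is affected.
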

\begin{proof}
We begin be using the definition of ZO-gradients.
    \begin{align*}
    \|g^t(x^{t+1},e^t) -&g^t(x^t,e^t)\|_* = 
    \\=& \|\frac{1}{\tau^t}(f(x^{t+1}+\tau^t e^t)-f(x^{t+1}))e^t - \frac{1}{\tau^t}(f(x^t+\tau^t e^t) - f(x^t))e^t\|_*
    \\=&\frac{1}{\tau^t}|f(x^{t+1}+\tau^t e^t)-f(x^{t+1}) - f(x^t+\tau^t e^t) - f(x^t)|\cdot \|e^t\|_*
    .
\end{align*}
The core of the lemma is to estimate the loss difference. 
For the simplicity of the notation, we define $z=\frac{1}{2}(x^{t+1} + x^t + \tau^t e^t), \ u=\frac{1}{2}(x^{t+1} - x^t), \ v = \frac{\tau^t}{2}e^t. $\\
Thus,
$$    |f(x^{t+1}+\tau e^t)-f(x^{t+1}) - f(x^t+\tau e^t) + f(x^t)|=
$$$$    =|f(z+u+v) - f(z+u-v) - f(z-u+v) + f(z-u-v)|.$$
Now, we introduce four auxiliary functions $\phi_{\pm\pm}:\mathbb{R}\rightarrow\mathbb{R},\ \phi_{\pm\pm}(s) = f(z+s(\pm u\pm v)). $\\
For $\phi_{++}(s) = f(z+s(u+v))$ we write the Tailor series at the point $0$ to the second order with the Lagrange remainder ($\xi_{++} \in (0;1)$):
\begin{eqnarray*}
    \phi_{++}(1) &=& \phi_{++}(0) + (1-0)\phi_{++}'(0) + \frac{1}{2}(1-0)^2\phi_{++}''(\xi_{++})
    \\&=& f(z) + \langle \nabla f(z), u+v\rangle + \frac{1}{2}(u+v)^T\nabla^2f(z+\xi_{++}(u+v))(u+v)
    \\&=& f(z) + \langle \nabla f(z), u+v\rangle + \frac{1}{2}(u+v)^T\nabla^2f(z_{++})(u+v),
\end{eqnarray*}
where $z_{\pm\pm} = z + \xi_{\pm\pm}(\pm u\pm v). $

Analogously, we derive the same series for all functions $\phi_{\pm\pm}(s) = f(z+s(\pm u\pm v)).$ Then, we put the acquired equations into the loss difference. 
\begin{eqnarray*}
    &&|f(z+u+v) - f(z+u-v) - f(z-u+v) + f(z-u-v)| 
    \\&=&|\phi_{++}(1) - \phi_{+-}(1) - \phi_{-+}(1) + \phi_{--}(1) | 
    \\&=& \biggl| \left(f(z)+ \langle \nabla f(z), u+v\rangle
    + \frac{1}{2}(u+v)^T\nabla^2 f(z_{++})(u+v)\right)
    \\&& - \left(f(z)+ \langle \nabla f(z), u-v\rangle
    + \frac{1}{2}(u-v)^T\nabla^2 f(z_{+-})(u-v)\right)
    \\&& - \left(f(z)+ \langle \nabla f(z), -u+v\rangle
    + \frac{1}{2}(-u+v)^T\nabla^2 f(z_{-+})(-u+v)\right)
    \\&& + \left(f(z)+ \langle \nabla f(z), -u-v\rangle
    + \frac{1}{2}(-u-v)^T\nabla^2 f(z_{+-})(-u-v)\right)\biggr|
    \\&=& \biggl| f(z)-f(z)-f(z)+f(z)
    \\&&+ \langle \nabla f(z), (u+v)-(u-v)-(v-u)-(u+v)\rangle
    \\&&+ \frac{1}{2}(u+v)^T\nabla^2 f(z_{++})(u+v)
    \\&&- \frac{1}{2}(u-v)^T\nabla^2 f(z_{+-})(u-v)
    \\&&- \frac{1}{2}(-u+v)^T\nabla^2 f(z_{-+})(-u+v)
    \\&&+ \frac{1}{2}(u+v)^T\nabla^2 f(z_{--})(u+v)
    \biggr|
    \\&=& \biggl|\frac{1}{2}(u+v)^T\nabla^2 f(z_{++})(u+v)
    \\&&- \frac{1}{2}(u-v)^T\nabla^2 f(z_{+-})(u-v)
    \\&&- \frac{1}{2}(-u+v)^T\nabla^2 f(z_{-+})(-u+v)
    \\&&+ \frac{1}{2}(u+v)^T\nabla^2 f(z_{--})(u+v)
    \biggr|.
\end{eqnarray*}
Due to the symmetry, zeroth and first order terms vanish. Now, we estimate each term with $x^TAy {\leq} \|x\|_2 \|Ay\|_2 \leq \|x\|_2\|A\|_2\|y\|_2. $ The Ass.\ref{ass:smooth} implies $\|\nabla^2f(x)\|_2\leq L$ for any $x$, therefore
\begin{eqnarray*}
    &&|f(z+u+v) - f(z+u-v) - f(z-u+v) + f(z-u-v)|
    \\&\leq & 4\cdot\frac{1}{2}\max\{\|u+v\|_2^2,\|u-v\|_2^2\}\max_x \|\nabla^2f(x)\|_2 
    \\&\leq & 2(\|u\|_2 + \|v\|_2)^2L
    \\&\overset{\eqref{CS}}{\leq}& 4\left(\bigg\|\frac{\gamma^t\operatorname{lmo}(g^t(x^t,e^t))}{2}\bigg\|_2^2 + \bigg\|\frac{\tau^t e}{2}\bigg\|_2^2\right)L
    \\&{\leq}& \left(\left({\gamma^t}\|\operatorname{lmo}(g^t(x^t,e^t))\|_2\right)^2 + \left({\tau^t}\|e\|_2\right)^2\right)L 
    \\&\leq& (C_2^2\rho^2(\gamma^t)^2 + (\tau^t)^2\|e\|_2^2)L.
\end{eqnarray*}
Here, we estimated $\|\operatorname{lmo}(g^t(x^t,e^t))\|_2{\leq} C_2\|\operatorname{lmo}(g^t(x^t,e^t))\| = \rho C_2$,
where $C_2$ is the norm equivalence factor \ref{app:eq:norms}.
Now, we derive the final bound for the $L^t$.
\begin{eqnarray*}
    L^t&=&\frac{\|g^t(x^{t+1},e^t) -g^t(x^t,e^t)\|_*}{\|x^{t+1}-x^t\|} 
    \\&= & \frac{|f(z+u+v) - f(z+u-v) - f(z-u+v) + f(z-u-v)|\cdot\|e^t\|_*}{\gamma^t\tau^t\|\operatorname{lmo}(g^t(x^t,e^t))\|}
    \\&\leq& \frac{\|e\|_*}{\gamma^t\tau^t\|\operatorname{lmo}(g^t(x^t,e^t)\|}\cdot (C_2^2\rho^2(\gamma^t)^2 + (\tau^t)^2\|e\|_2^2)L.
\end{eqnarray*}
In order to acquire a uniform bound, $\gamma^t$ and $\tau^t$ should be linearly related. Plugging $\tau^t=k\gamma^t$, we arrive at
\begin{eqnarray*}
    L^t&\leq& \frac{\|e\|_*}{\gamma^t\tau^t\|\operatorname{lmo}(g^t(x^t,e^t)\|}\cdot (C_2^2\rho^2(\gamma^t)^2 + (\tau^t)^2\|e\|_2^2)L.
    \\&\overset{\ref{app:eq:norms}}{\leq}& \frac{\|e^t\|_2}{kc_{2*}\rho}(C_2^2\rho^2 + k^2\|e^t\|_2^2)L.
\end{eqnarray*}
If $e$ is sampled from the $2$-norm sphere and $k$ is set to the minimizer $C_2\rho$, we end up with the bound denoted as $\hat{L}.$
\begin{eqnarray*}
    L^t &\leq & \frac{2C_2}{c_{2*}}L =:\hat{L}.
\end{eqnarray*}
\end{proof}
\begin{theorem}\label{app:th:conv1}[\textbf{Theorem \ref{th:converence}}]
    Suppose Assumptions \ref{ass:smooth}, \ref{ass:conv}, \ref{ass:lip} hold. Then Algorithm \ref{alg:adanaged} to achieve $\varepsilon$-accuracy needs
    \begin{equation*}
     T = \mathcal{O}\left(\frac{{C_2^3\Delta L^3}}{c_{2*}^3\varepsilon^2}\left(1+\frac{M^2C_2^2}{\Delta{L^0}}\right) \mathbb{E}\left[\frac{1}{L^0}\right]^2\mathbb{E}\left[\log \frac{TL}{L^0}\right]^2
    + MC_{2*}\right) \text{~~iterations,}
\end{equation*}
where $\displaystyle\varepsilon = \mathbb{E}\left[\sum_{t=0}^{T-1}\frac{\gamma^t}{\sum_{i=0}^{T-1}\gamma^i}\|\nabla f^t(x^t)\|_*\right]$, and $\Delta=f(x^0)-\widetilde{f}$.
\end{theorem}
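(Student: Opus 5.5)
The argument is a one-step descent inequality for the time-varying smoothed objective $f^t=f_{\tau^t}$, summed with the adaptive stepsizes. The pieces are assembled as follows.

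\textbf{Step 1: one-step descent.} Since $f^t$ is $L$-smooth in $\|\cdot\|_2$ and $\|x^{t+1}-x^t\|_2\le C_2\rho\gamma^t$, we have
$f^t(x^{t+1})\le f^t(x^t)+\gamma^t\langle\nabla f^t(x^t),v^t\rangle+\tfrac{L}{2}C_2^2\rho^2(\gamma^t)^2$.
Then split $\langle\nabla f^t(x^t),v^t\rangle=\langle g^t(x^t,e^t),v^t\rangle+\langle\nabla f^t(x^t)-g^t(x^t,e^t),v^t\rangle$. The first term equals $-\rho\|g^t(x^t,e^t)\|_*$ by the definition of the LMO. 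The second is at most $\rho\|\nabla f^t(x^t)-g^t\|_*$ by \eqref{Conj}. Taking conditional expectation over $e^t$, I would use the standard inequality $\mathbb{E}\|g\|_*\ge\|\mathbb{E} g\|_*=\|\nabla f^t(x^t)\|_*$ together with a bound on $\mathbb{E}\|g^t-\nabla f^t\|_*$. Lipschitzness (Assumption \ref{ass:lip}) gives $|f(x+\tau e)-f(x)|\le M\tau\|e\|$, hence $\|g^t\|_*\lesssim M C_{2*}$-type constants. This produces the non-vanishing additive $MC_{2*}$ term. The issue is that $\gamma^t$ depends on $e^{t-1},\dots$ but not on $e^t$, so conditioning is legitimate; I treat $\gamma^t$ as $\mathcal F_{t-1}$-measurable.

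\textbf{Step 2: change of smoothing and telescoping.} Since $\tau^t$ varies, $f^{t+1}(x^{t+1})-f^t(x^{t+1})$ must be controlled. Lipschitzness gives $|f_\tau(x)-f(x)|\le M\tau\,\mathbb{E}\|e\|$, so the drift is at most $\mathcal O(M C_2(\tau^t+\tau^{t+1}))$, i.e.\ of order $MC_2^2\rho\gamma^t$ via $\tau^t=\rho C_2\gamma^t$. Summing with weights, dividing by $\rho\sum_t\gamma^t$, and using $f^0(x^0)-f^T(x^T)\lesssim\Delta+M\tau^0$ (lower bound $\widetilde f$ preserved under smoothing) yields
$\varepsilon\lesssim\big(\Delta+L C_2^2\rho^2\sum_t(\gamma^t)^2+\text{drift}\big)/\big(\rho\sum_t\gamma^t\big)+\text{noise}$.

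\textbf{Step 3: the adaptive sums.} Substitute $\gamma^t=\sqrt{\Delta}/(\rho\sqrt{S^{t-1}})$. Then $\rho^2\sum(\gamma^t)^2=\Delta\sum_t 1/S^{t-1}$. Writing $1/S^{t-1}=L^{t}/(L^t S^{t-1})$ and bounding $L^t$ from below only through $L^0$ where needed, Lemma \ref{app:num_lem} with $A=\hat L$ from Lemma \ref{lem:L} gives $\sum_t L^{t+1}/S^t\le 2\hat L/L^0+2\log(T\hat L/L^0)$. For the denominator, monotonicity $S^{t}\le \hat L(t+1)$ gives $\rho\sum\gamma^t\ge\sqrt{\Delta T/\hat L}$. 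Combining these, and using Hölder \eqref{Hol} to separate $\mathbb{E}[1/L^0]$ from the logarithm, gives
$\varepsilon\lesssim\sqrt{\hat L^3\Delta/T}\,(1+M^2C_2^2/(\Delta L^0))^{1/2}\,\mathbb{E}[1/L^0]\,\mathbb{E}[\log(T\hat L/L^0)]$ plus the noise term. Inverting in $T$ yields the stated count, with $\hat L^3\sim C_2^3L^3/c_{2*}^3$.

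\textbf{Main obstacle.} The delicate step is the expectation handling in Step 3. The weights $\gamma^t$ are random and correlated with the iterates, and the bound needs $1/L^0$ rather than a deterministic quantity. Lemma \ref{app:num_lem} bounds $\sum_t L^{t+1}/S^t$, but $\sum_t 1/S^{t-1}$ is what actually appears, and converting between them costs a factor $\hat L/L^0$. My first attempt would be a pathwise bound $1\le L^{t}/L^0$ \emph{only} when $L^t\ge L^0$. If that fails, I would instead bound $\sum_t 1/S^{t-1}\le \sum_t 1/(tL^0)$-type terms directly, accepting the extra $1/L^0$ factor that appears in the theorem.
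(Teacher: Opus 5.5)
\textbf{Step 1 is where the argument breaks.} The ``main obstacle'' you flag is not a technicality; it is the heart of the proof. The global descent lemma puts the \emph{global} constant $L$ in front of $\rho^2\sum_t(\gamma^t)^2=\Delta\sum_t 1/S^{t-1}$. Lemma~\ref{app:num_lem}, however, only controls sums of the form $\sum_t L^t/S^{t-1}$. Passing from one to the other requires a \emph{lower} bound on $L^t$, which the assumptions do not provide: $L^t$ can be arbitrarily close to $0$ wherever $f$ is nearly affine along the step. The upper bound $\hat L$ does not help here, so the claimed conversion cost $\hat L/L^0$ is wrong. Both fallbacks fail as well. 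The bound $1\le L^t/L^0$ holds only when $L^t\ge L^0$, and $S^{t-1}\ge tL^0$ does not follow from anything you have (only $S^{t-1}\ge L^0$ does). In general one only gets $\sum_t(\gamma^t)^2/\sum_t\gamma^t\le\max_t\gamma^t$. Your quadratic term therefore contributes $\mathcal O(LC_2^2\sqrt{\Delta/L^0})$ to the bound on $\varepsilon$, which does not decay in $T$; take $L^t\approx 0$ for $t\ge 1$, so that the stepsize freezes at $\gamma^1$.

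The missing idea is that the descent inequality must contain the same local quantity that drives the stepsize. The paper achieves this by using convexity at the \emph{new} point, $f^t(x^{t+1})-f^t(x^t)\le\langle\nabla f^t(x^{t+1}),x^{t+1}-x^t\rangle$, and splitting $\nabla f^t(x^{t+1})=g^t(x^t,e^t)+(g^t(x^{t+1},e^t)-g^t(x^t,e^t))+(\nabla f^t(x^{t+1})-g^t(x^{t+1},e^t))$.
The first piece gives $-\rho\gamma^t\|g^t(x^t,e^t)\|_*$ through the LMO. The middle one gives exactly $L^t\|x^{t+1}-x^t\|^2=\rho^2L^t(\gamma^t)^2$, by the very definition of $L^t$. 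The last is a crudely bounded noise term. Only then is the accumulated error $A^2\sum_tL^t/S^{t-1}$, to which Lemma~\ref{app:num_lem} applies with $a^t=L^t\le\hat L$ from Lemma~\ref{lem:L}. Your lower bound on $\rho\sum_t\gamma^t$ and the H\"older step then go through as in the paper.

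\textbf{Step 2 also loses too much.} Bounding the drift by $\mathcal O(M(\tau^t+\tau^{t+1}))$ makes its sum proportional to $\sum_t\gamma^t$. After dividing by $\rho\sum_t\gamma^t$ you get a second non-vanishing bias, of order $MC_2^2$ by your own estimate. This is not the $MC_{2*}$ of the statement; for $\ell_\infty$ it is $Md$ instead of $M$. Since $S^t$ increases, $\tau^t$ is nonincreasing. The paper bounds the drift by the \emph{increment} $M|\tau^t-\tau^{t+1}|=MC_2\rho(\gamma^t-\gamma^{t+1})$, rewrites this as at most $\frac{MC_2\rho}{2A\sqrt{L^0}}L^t(\gamma^t)^2$, and absorbs it into the same $\sum_tL^t(\gamma^t)^2$. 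This is one source of the $M^2C_2^2/(\Delta L^0)$ factor.

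\textbf{The Jensen step needs replacing.} Your step $\mathbb{E}\|g^t\|_*\ge\|\nabla f^t(x^t)\|_*$ after conditioning on $\mathcal F_{t-1}$ is incompatible with the random weights $\gamma^t/\sum_i\gamma^i$ in $\varepsilon$. These weights depend on $e^t$ itself and on later directions, through $\gamma^{t+1},\gamma^{t+2},\dots$. Use the pathwise triangle inequality $-\|g^t\|_*\le-\|\nabla f^t(x^t)\|_*+\|\nabla f^t(x^t)-g^t\|_*$ instead, as the paper does.
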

\begin{proof}
We begin with the descent split in two terms: the smoothing difference and the point difference, which we analyze separately:
\begin{eqnarray*}
    f^{t+1}(x^{t+1}) - f^t(x^t) &= & (f^{t+1}(x^{t+1}) - f^{t}(x^{t+1})) + (f^{t}(x^{t+1}) - f^t(x^t)).
\end{eqnarray*}
First, we estimate the function difference. Here, we introduce $\mathbb{E}$ as the expectation over the sample vector $e$ from the sphere in the $2$-norm, as in the definition of smoothed function.
\begin{eqnarray*}
    f^{t+1}(x^{t+1}) - f^{t}(x^{t+1}) &=& \mathbb{E}_e[f(x^{t+1} + \tau^{t+1}e)] - \mathbb{E}_e[f(x^{t+1}+\tau^te)]
    \\&\overset{\eqref{Conv}}{\leq}& \mathbb{E}_e\big\langle \nabla f(x^{t+1}+\tau^{t+1}e), (x^{t+1} + \tau^{t+1}e) - (x^{t+1} + \tau^{t}e)\big\rangle
    \\&\overset{\eqref{Conj}}{\leq}& \mathbb{E}_e\left[\|\nabla  f(x^{t+1}+\tau^{t+1}e)\|_2\|(\tau^{t+1}-\tau^t)e\|_2\right]
    \\&\leq & \mathbb{E}_e [M\cdot|\tau^{t+1}-\tau^t|\cdot\|e\|_2]
    \\&\leq & \mathbb{E}_e [MC_2\rho(\gamma^{t}-\gamma^{t+1})].
\end{eqnarray*}
We utilized the definition of $\tau^t=\rho C_2\gamma^t$ and the Ass.\ref{ass:lip} which implies $\|\nabla f(x)\|_2\leq M.$
Next, we write the definition for step size:
\begin{eqnarray*}    
    f^{t+1}(x^{t+1}) - f^{t}(x^{t+1})&\leq & \mathbb{E}_e [MC_2\rho(\gamma^{t}-\gamma^{t+1})]
    \\&=& MC_2\left(\frac{A}{\sqrt{\sum_{i=0}^{t-1}L^i}} - \frac{A}{\sqrt{\sum_{i=0}^{t}L^i}}\right)
    \\&=& MC_2 A\frac{\sqrt{\sum_{i=0}^{t}L^i}-\sqrt{\sum_{i=0}^{t-1}L^i}}{\sqrt{\sum_{i=0}^tL^i}\sqrt{\sum_{i=0}^{t-1}L^i}}
    \\&=& \frac{MC_2 A\left({\sum_{i=0}^{t}L^i}-{\sum_{i=0}^{t-1}L^i}\right)}{\sqrt{\sum_{i=0}^tL^i}\sqrt{\sum_{i=0}^{t-1}L^i}(\sqrt{\sum_{i=0}^{t}L^i}+\sqrt{\sum_{i=0}^{t-1}L^i})}
    \\&=& \frac{MC_2\rho}{A}\cdot\frac{\gamma^t\gamma^{t+1}L^t}{\sqrt{\sum_{i=0}^{t}L^i}+\sqrt{\sum_{i=0}^{t-1}L^i}}.
\end{eqnarray*}
After simple arithmetic, we extract the step size back. Now, we trivially estimate $\sqrt{\sum_{i=0}^{t}L^i}\geq \sqrt{\sum_{i=0}^{t-1}L^i},$ and $\gamma^{t+1}\leq \gamma^1=\frac{A}{\rho\sqrt{L^0}}:$
\begin{eqnarray*}    
    f^{t+1}(x^{t+1}) - f^{t}(x^{t+1})&\leq & \frac{MC_2\rho^2}{A}\cdot\frac{\gamma^t\gamma^{t+1}L^t}{\sqrt{\sum_{i=0}^{t}L^i}+\sqrt{\sum_{i=0}^{t-1}L^i}}
    \\&\leq& \frac{MC_2\rho\gamma^t\gamma^{t+1}L^t}{2A\sqrt{\sum_{i=0}^{t-1}L^i}}
    \\&=&\frac{MC_2\rho^2(\gamma^t)^2\gamma^{t+1}L^t}{2A^2}
    \\&\leq&\frac{MC_2\rho(\gamma^t)^2\gamma^{1}L^t}{2A^2}
    \\&=&\frac{MC_2\rho(\gamma^t)^2L^t}{2A\sqrt{L^0}}.
\end{eqnarray*}
Next, we analyze the point difference term.
\begin{eqnarray*}
    f^{t}(x^{t+1}) - f^{t}(x^t) &\overset{\eqref{Conv}}{\leq} &\langle \nabla f^{t}(x^{t+1}), x^{t+1}-x^t\rangle
    \\&= & \langle g^{t}(x^{t},e^t), x^{t+1}-x^t\rangle
    \\&&+ \langle \nabla f^{t}(x^{t+1})-g^t(x^t,e^t), x^{t+1}-x^t\rangle
    \\&\overset{\eqref{Conj}}{\leq}& \gamma^t \langle g^{t}(x^{t},e^t), \operatorname{lmo}(g^t(x^t,e^t))\rangle
    \\&&+ \|\nabla f^{t}(x^{t+1})-g^{t}(x^{t},e^t)\|_*\|x^{t+1}-x^t\|.
\end{eqnarray*}
Next, we use the main LMO property:
\begin{eqnarray*}\label{ap:eq:lmo}
    \|v\|_* = \max_{\|u\|\leq 1}\langle u,v\rangle = -\frac{1}{\rho}\langle v,\operatorname{lmo}(v)\rangle,
\end{eqnarray*}
applying it to the first term:
\begin{eqnarray*}    
    f^{t}(x^{t+1}) - f^{t}(x^t) &\leq &-\gamma^t \rho \|g^t(x^t,e^t)\|_*
    \\&&+ \|\nabla f^{t}(x^{t+1})-g^{t}(x^{t},e^t)\|_*\|x^{t+1}-x^t\|
    \\&\overset{\eqref{Norm}}{\leq}& -\gamma^t \rho \|g^t(x^t,e^t)\|_*
    \\&&+ \gamma^t\|\nabla f^{t}(x^{t+1})-g^{t}(x^{t+1},e^t)\|_*\|\operatorname{lmo}(g^t(x^t,e^t))\|
    \\&&+ \|g^{t}(x^{t+1},e^t)-g^{t}(x^{t},e^t)\|_*\|x^{t+1}-x^t\|
    \\&\leq& -\gamma^t \rho \|\nabla f^t(x^t)\|_*
    \\&&+\gamma^t \rho \|\nabla f^t(x^t)-g^t(x^t,e^t)\|_*
    \\&&+ \gamma^t\rho\|\nabla f^{t}(x^{t+1})-g^{t}(x^{t+1},e^t)\|_*
    \\&&+ \left(\frac{\|g^{t}(x^{t+1},e^t)-g^{t}(x^{t},e^t)\|_*}{\|x^{t+1}-x^t\|}\right)\|x^{t+1}-x^t\|^2
    \\&=& -\gamma^t \rho \|\nabla f^t(x^t)\|_* 
    \\&&+\gamma^t \rho \|\nabla f^t(x^t)-g^t(x^t,e^t)\|_*
    \\&&+ \gamma^t\rho\|\nabla f^{t}(x^{t+1})-g^{t}(x^{t+1},e^t)\|_*
    \\&&+ L^t(\gamma^t)^2\rho^2.
\end{eqnarray*}
Then, we add together two acquired estimates:
\begin{eqnarray*}
    f^{t+1}(x^{t+1}) - f^t(x^t) &\leq & \frac{MC_2\rho(\gamma^t)^2L^t}{2A\sqrt{L^0}}
    \\&&-\gamma^t \rho \|\nabla f^t(x^t,e^t)\|_* +\gamma^t \rho \|\nabla f^t(x^t)-g^t(x^t,e^t)\|_*
    \\&&+ \gamma^t\rho\|\nabla f^{t}(x^{t+1})-g^{t}(x^{t+1},e^t)\|_*
    \\&&+ L^t(\gamma^t)^2\rho^2
    \\&=& -\gamma^t \rho \|\nabla f^t(x^t,e^t)\|_* 
    \\&&+\gamma^t \rho \|\nabla f^t(x^t)-g^t(x^t,e^t)\|_*
    \\&&+ \gamma^t\rho\|\nabla f^{t}(x^{t+1})-g^{t}(x^{t+1},e^t)\|_*
    \\&&+ L^t(\gamma^t)^2(\rho^2+\frac{MC_2\rho}{2A\sqrt{L^0}})
    .
\end{eqnarray*}
Summing over $t=0,\dots {T-1},$ we arrive at
\begin{eqnarray*}
    f^{T}(x^{T}) - f^0(x^0) &\leq&-\rho \sum_{t=0}^{T-1}\gamma^t\|\nabla f^t(x^t)\|_*
    \\&&+(\rho^2+\frac{MC_2\rho}{2A\sqrt{L^0}}) \sum_{t=0}^{T-1} L^t(\gamma^t)^2
    \\&&+ \rho \sum_{t=0}^{T-1}\gamma^t\|g^t(x^{t+1},e^t) - \nabla f^t(x^{t+1})\|_*
    \\&&+ \rho \sum_{t=0}^{T-1}\gamma^t\|g^t(x^{t},e^t) - \nabla f^t(x^{t})\|_*.
\end{eqnarray*}
Now, we take the expectation and rearrange the terms.
\begin{eqnarray*}
    \rho\mathbb{E}\left[\sum_{t=0}^{T-1}\frac{\gamma^t}{\sum_{i=0}^{T-1}\gamma^i}\|\nabla f^t(x^t)\|_*\right] &\leq & (f^0(x^0)-\widetilde{f}) \mathbb{E}\left[\frac{1}{\sum_{i=0}^{T-1}\gamma^i}\right] 
    \\&&+ \rho\mathbb{E}\left[\sum_{t=0}^{T-1}\frac{\gamma^t}{\sum_{i=0}^{T-1}\gamma^i}\|g^t(x^{t+1},e^t) - \nabla f^t(x^{t+1})\|_*\right]
    \\&&+ \rho\mathbb{E}\left[\sum_{t=0}^{T-1}\frac{\gamma^t}{\sum_{i=0}^{T-1}\gamma^i}\|g^t(x^{t},e^t) - \nabla f^t(x^{t})\|_*\right]
    \\&&+\left(\rho^2+\frac{MC_2\rho}{2A\sqrt{L^0}}\right)\mathbb{E}\left[\sum_{t=0}^{T-1}\frac{L^t(\gamma^t)^2}{\sum_{i=0}^{T-1}\gamma^i}\right].
\end{eqnarray*}
Here, $f^0(x^0)-f^T(x^T)\leq f^0(x^0)-\widetilde{f} $ as discussed in \ref{sec:main}.
Now, let us analyze the second term.
\begin{eqnarray*}
    \|g^t(x^{t+1},e^t) - \nabla f^t(x^{t+1})\|_* &\overset{\eqref{Norm}}{\leq} &  \|g^t(x^{t+1},e^t)\|_* +\| \nabla f^t(x^{t+1})\|_* 
    \\&\overset{\ref{app:eq:norms}}{\leq}& C_{2*}(\|g^t(x^{t+1},e^t)\|_2 +\| \nabla f^t(x^{t+1})\|_2)
    \\&\leq& 2MC_{2*}.
\end{eqnarray*}
This bound is uniform and can be applied under the expectation, yielding
\begin{eqnarray*}    
    \rho\mathbb{E}\left[\sum_{t=0}^{T-1}\frac{\gamma^t}{\sum_{i=0}^{T-1}\gamma^i}\|g^t(x^{t+1},e^t) - \nabla f^t(x^{t+1})\|_*\right] &\leq
    &\rho\mathbb{E}\left[\sum_{t=0}^{T-1}\frac{\gamma^t}{\sum_{i=0}^{T-1}\gamma^i}\right]\cdot 2MC_{2*} 
    \\&&=\rho\mathbb{E}[1]\cdot 2MC_{2*}
    \\&&=2\rho MC_{2*}. 
\end{eqnarray*}
Next, we apply Holder inequality \eqref{Hol} to the last term with $p=q=2$:
\begin{eqnarray*}
    \rho\mathbb{E}\left[\sum_{t=0}^{T-1}\frac{\gamma^t}{\sum_{i=0}^{T-1}\gamma^i}\|\nabla f^t(x^t)\|_1\right] &\leq & \Delta \mathbb{E}\left[\frac{1}{\sum_{i=0}^{T-1}\gamma^i}\right] 
    \\&&+4\rho MC_{2*}
    \\&&+\left(\rho^2+\frac{MC_2\rho}{2A\sqrt{L^0}}\right) \left(\mathbb{E}\left[\frac{1}{\sum_{i=0}^{T-1}\gamma^i}\right]^2\right)^{1/2}
    \\&&\cdot\left(\mathbb{E}\left[\sum_{t=0}^{T-1}L^t(\gamma^t)^2\right]^2\right)^{1/2}
    .
\end{eqnarray*}
To estimate the sum with $L^t(\gamma^t)^2$, we apply the Numerical Lemma \ref{app:num_lem}. It is justified since the bound $\hat{L}$ is derived for $L^t$ in Lemma \ref{app:lem_L}
\begin{eqnarray*}
    \sum_{t=0}^{T-1}L^t(\gamma^t)^2 = \sum_{t=0}^{T-1}\frac{A^2 L^t}{\sum_{i=0}^{t-1}L^i}
    \overset{(Lem\ref{app:num_lem})}{\leq} 2A^2\frac{\hat{L}}{L^0} + 4A^2 \log \frac{T\hat{L}}{L^0}.
\end{eqnarray*}
Next, considering $\sum_{i=1}^t \frac{1}{\sqrt{i}} \geq \frac{\sqrt{t}}{2}, $ we estimate
\begin{eqnarray*}
    \frac{1}{\sum_{i=0}^{T-1}\gamma^i} = \frac{1}{\sum_{i=0}^{T-1}\sqrt{\frac{A}{\sum_{j=0}^{i-1}L^j}}}
    \overset{(Lem.\ref{app:lem_L})}{\leq} \frac{1}{\sum_{i=0}^{T-1}\frac{A}{\sqrt{\hat{L}}\sqrt{i+1}}}
    \leq \frac{2\sqrt{\hat{L}}}{A\sqrt{T}}.
\end{eqnarray*}
Now, we combine all of the above to derive the following:
\begin{eqnarray*}
    \mathbb{E}\left[\sum_{t=0}^{T-1}\frac{\gamma^t}{\sum_{i=0}^{T-1}\gamma^i}\|\nabla f^t(x^t)\|_*\right] &\leq & \frac{(f^0(x^0)-\widetilde{f})\sqrt{\hat{L}}}{\rho A\sqrt{T}}
    +2 MC_{2*}
    \\&&+\frac{2\sqrt{\hat{L}}}{A\sqrt{T}}\left(\rho+\frac{MC_2}{2 A\sqrt{L^0}}\right)
    \\&&\cdot\left(\mathbb{E}\left[2A^2\frac{\hat{L}}{L^0} + 2A^2 \log \frac{T\hat{L}}{L^0}\right]^2\right)^{1/2}
    \\&\leq & \frac{(f^0(x^0)-\widetilde{f})\sqrt{\hat{L}}}{\rho A\sqrt{T}}
    +2 MC_{2*}
    \\&&+\frac{4\sqrt{\hat{L}}}{\sqrt{T}}\left(A\rho+\frac{MC_2}{2 \sqrt{L^0}}\right)\left(\mathbb{E}\left[\frac{\hat{L}}{L^0}\right]^2\right)^{1/2}
    \\&&+\frac{4\sqrt{\hat{L}}}{\sqrt{T}}\left(A\rho+\frac{MC_2}{2 \sqrt{L^0}}\right)\left(\mathbb{E}\left[\log \frac{T\hat{L}}{L^0}\right]^2\right)^{1/2}
    \\&\leq & \frac{(f^0(x^0)-\widetilde{f})\sqrt{\hat{L}}}{\rho A\sqrt{T}}
    +2 MC_{2*}
    \\&&+\frac{8\hat{L}^{\frac{3}{2}}}{\sqrt{T}}\left(A\rho+\frac{MC_2}{2 \sqrt{L^0}}\right)
    \\&&\cdot\left(\mathbb{E}\left[\frac{1}{L^0}\right]^2\right)^{1/2}\cdot \left(\mathbb{E}\left[\log \frac{T\hat{L}}{L^0}\right]^2\right)^{1/2}.
\end{eqnarray*}
Setting $A = \frac{\sqrt{f^0(x^0)-\widetilde{f}}}{\rho}$ would lead to ending of the proof. However, $f^0(x^0)$ is inaccessible for the method. Thus, we set $A = \frac{\sqrt{f(x^0)-\widetilde{f}}}{\rho}$ and account for the difference:
\begin{eqnarray*}
    f^0(x^0)-\widetilde{f} &=& \mathbb{E}_e[f(x^0+\tau^0 e)] - \widetilde{f}
    \\&= & \mathbb{E}_e[f(x^0+\tau^0 e)-f(x^0)] +f(x^0)- \widetilde{f}
    \\&\overset{(Ass.\ref{ass:lip})}{\leq} & M\mathbb{E}_e[\|x^0+\tau^0 e-x^0\|] +f(x^0)- \widetilde{f}
    \\&= & \tau^0 M\mathbb{E}_e[\|e\|] +f(x^0)- \widetilde{f}
    \\&= & \rho C_2\gamma^0 M + f(x^0)- \widetilde{f}
    \\&= & \frac{\rho C_2 MA}{\sqrt{L^0}} + f(x^0)- \widetilde{f}
    .
\end{eqnarray*}
We plug the value for $A$ into the equation:
\begin{eqnarray*}
    \mathbb{E}\left[\sum_{t=0}^{T-1}\frac{\gamma^t}{\sum_{i=0}^{T-1}\gamma^i}\|\nabla f^t(x^t)\|_*\right] &\leq & \frac{(f^0(x^0)-\widetilde{f})\sqrt{\hat{L}}}{\rho A\sqrt{T}}
    +2 MC_{2*}
    \\&&+\frac{8\hat{L}^{\frac{3}{2}}}{\sqrt{T}}\left(A\rho+\frac{MC_2}{2 \sqrt{L^0}}\right)
    \\&&\cdot\left(\mathbb{E}\left[\frac{1}{L^0}\right]^2\right)^{1/2}\cdot \left(\mathbb{E}\left[\log \frac{T\hat{L}}{L^0}\right]^2\right)^{1/2}
    \\&\leq & \frac{(\frac{\rho C_2 MA}{\sqrt{L^0}} + f(x^0)- \widetilde{f})\sqrt{\hat{L}}}{\rho A\sqrt{T}}
    +2 MC_{2*}
    \\&&+\frac{8\hat{L}^{\frac{3}{2}}}{\sqrt{T}}\left(A\rho+\frac{MC_2}{2 \sqrt{L^0}}\right)
    \\&&\cdot\left(\mathbb{E}\left[\frac{1}{L^0}\right]^2\right)^{1/2}\cdot \left(\mathbb{E}\left[\log \frac{T\hat{L}}{L^0}\right]^2\right)^{1/2}
    .
    \\&\leq & \frac{10\sqrt{(f(x^0)-\widetilde{f})\hat{L}}}{\sqrt{T}}\left(1+\frac{MC_2}{2 \sqrt{L^0}}\right)
    \\&&\cdot\left(\mathbb{E}\left[\frac{1}{L^0}\right]^2\right)^{1/2}\cdot \left(\mathbb{E}\left[\log \frac{T\hat{L}}{L^0}\right]^2\right)^{1/2}
    \\&&+2 MC_{2*}.
\end{eqnarray*}
The regrouping the terms with $\displaystyle\varepsilon = \mathbb{E}\left[\sum_{t=0}^{T-1}\frac{\gamma^t}{\sum_{i=0}^{T-1}\gamma^i}\|\nabla f^t(x^t)\|_*\right]$ concludes the proof.
\end{proof}

\section{Proofs for Algorithm \ref{alg:adamuged}}\label{sec:app_proofs_muon}

Now we provide main convergence result.
\begin{theorem}\label{app:th:conv_muon}[\textbf{Theorem \ref{th:conv_muon}}]
    Suppose Assumptions \ref{ass:smooth}, \ref{ass:conv}, \ref{ass:lip} hold. Then Algorithm \ref{alg:adamuged} to achieve $\varepsilon$-accuracy needs
    \begin{eqnarray*}
     T = \mathcal{O}\left(\chi_q^5\left(\frac{C_2^3\Delta {L}^3}{c_{2*}^3\varepsilon^2}\left(1+\frac{M^2C_2^2}{\Delta{L^0}}\right) \mathbb{E}\left[\frac{1}{L^0}\right]^2\mathbb{E}\left[\log \frac{T{L}}{L^0}\right]^2\right)
    +\chi_q^2 MC_{2*}\right),
\end{eqnarray*}
iterations, where $\displaystyle\varepsilon = \mathbb{E}\left[\sum_{t=0}^{T-1}\frac{\gamma^t}{\sum_{i=0}^{T-1}\gamma^i}\|\nabla f^t(x^t)\|_*\right],\ \Delta = f(x^0)-\widetilde{f}, $ and $\chi_q=\frac{1}{1-\varepsilon_q}$.
\end{theorem}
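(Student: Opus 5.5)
The plan is to rerun the proof of Theorem~\ref{app:th:conv1} almost line by line. The only change is that the exact oracle $\operatorname{lmo}(g^t)$ is replaced by the Newton--Schulz output $V^t=\texttt{N-SS}(G^t(X^t,E^t),q)$. Every place where the exact LMO was used is then patched with an inexactness inequality.

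\textbf{Step 1: inexactness interface.} First I would fix the properties of \texttt{N-SS} that the argument needs. The defining quantity $\varepsilon_q$ should encode, for every matrix $G$ and with $\rho=1$:
\begin{equation*}
(1-\varepsilon_q)\rho\leq\|V\|\leq\rho,\qquad \langle G,V\rangle\leq-(1-\varepsilon_q)\rho\|G\|_*,\qquad V=\texttt{N-SS}(G,q).
\end{equation*}
Both follow from the singular values of the Newton--Schulz polynomial output lying in $[1-\varepsilon_q,1]$, by the double-exponential convergence cited in Section~\ref{sec:res_for_dif}. I would state this as a standing property of \texttt{N-SS} rather than reprove it.

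\textbf{Step 2: redo Lemma~\ref{app:lem_L}.} The four-point Taylor estimate of the loss difference goes through unchanged, since $\|V^t\|_2\leq C_2\|V^t\|\leq C_2\rho$. The denominator, however, is now $\gamma^t\tau^t\|V^t\|$ instead of $\gamma^t\tau^t\rho$. Using the lower bound $\|V^t\|\geq(1-\varepsilon_q)\rho$ with $\tau^t=C_2\gamma^t$, I expect
\begin{equation*}
L^t\leq\chi_q\frac{2C_2}{c_{2*}}L=:\hat L_q .
\end{equation*}
Lemma~\ref{app:num_lem} and the bound $1/\sum_i\gamma^i\leq 2\sqrt{\hat L_q}/(A\sqrt T)$ are then applied with $\hat L$ replaced by $\hat L_q$.

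\textbf{Step 3: redo the descent step.} The smoothing-difference term $f^{t+1}(x^{t+1})-f^t(x^{t+1})$ depends only on $\tau^t$ and $\gamma^t$, so it carries over verbatim. In the point-difference term I would use the second inexactness inequality in place of the exact LMO identity $\langle v,\operatorname{lmo}(v)\rangle=-\rho\|v\|_*$. This gives a leading term $-(1-\varepsilon_q)\gamma^t\rho\|G^t\|_*$. The error terms still involve $\|V^t\|\leq\rho$, so they are unchanged.

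After converting $\|G^t\|_*$ to $\|\nabla f^t(X^t)\|_*$ by the triangle inequality, the two estimation-error terms carry a factor $\rho$, while the gradient term carries $(1-\varepsilon_q)\rho$. Summing, taking expectations and dividing by $(1-\varepsilon_q)\rho$ multiplies the whole right-hand side by $\chi_q$. In particular the non-vanishing noise term becomes $\mathcal{O}(\chi_q MC_{2*})$, which is dominated by the claimed $\chi_q^2MC_{2*}$ if one is generous in the triangle-inequality step.

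\textbf{Step 4: bookkeeping.} The final bound has the form $\varepsilon\lesssim\chi_q\hat L_q^{3/2}\sqrt{\Delta}(\cdots)/\sqrt T+\chi_q MC_{2*}$. Solving for $T$ squares the prefactor, giving $\chi_q^2\hat L_q^3\propto\chi_q^5L^3$ in the leading term. This matches the stated $\chi_q^5$. The factor $(1+M^2C_2^2/(\Delta L^0))$ and the $\mathbb{E}[1/L^0]$, $\mathbb{E}[\log]$ factors come out exactly as before. The only change is that the $\log(T\hat L_q/L^0)$ term absorbs a $\log\chi_q$, which can be hidden in the $\mathcal{O}$.

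\textbf{Main obstacle.} I expect Step 1 to be the hardest, specifically the lower bound $\|V^t\|\geq(1-\varepsilon_q)\rho$. Newton--Schulz iterations started from a normalized $G$ can leave tiny singular values far from $1$ after only $q$ steps. If that happens, $\|V^t\|$ is still close to $\rho$ in spectral norm because the top singular value converges, but $\langle G,V\rangle$ may lose more than an $\varepsilon_q$ fraction of $\|G\|_{nuc}$.

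My first attempt would be to \emph{define} $\varepsilon_q$ as the worst-case relative loss in $-\langle G,V\rangle/(\rho\|G\|_{nuc})$ over the iterates, and to take $\|V^t\|=\rho$, which holds once the top singular value has converged. With that definition, Steps 2--4 go through as described.
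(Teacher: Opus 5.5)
Your proposal is correct and follows essentially the same route as the paper. You redo Lemma~\ref{app:lem_L} with a lower bound on $\|V^t\|$ in the denominator so that $\hat L\propto\chi_q$, redo the descent with leading term $-(1-\varepsilon_q)\gamma^t\|G^t\|_*$, divide through by $1-\varepsilon_q$, and square to get $\chi_q^2\hat L^3\propto\chi_q^5$.

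The only difference is how $\varepsilon_q$ enters. The paper defines $\varepsilon_q=\max_t\|V^t-\texttt{Polar}(G^t)\|$ and gets your interface from the reverse triangle inequality and nuclear--spectral duality, using $\|V^t\|\le 1+\varepsilon_q\le 2$ where you assume no overshoot, $\|V^t\|\le\rho$. Under that metric definition the obstacle you anticipate does not arise: unconverged small singular values only make $\varepsilon_q$ (hence $\chi_q$) large and never break $\langle G,\texttt{Polar}(G)-V\rangle\le\varepsilon_q\|G\|_*$.
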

\begin{proof}
The fundamentally, the proof stays the same, we only need to account for extra error brought be N-S steps. Following \citet{kim2026convergence}, we denote $$\varepsilon_q = \max_{t=1\dots T} \|V^t-\texttt{Polar}(G^t)\|=\max_{t=1\dots T} \|\texttt{N-SS}(G^t,q)-\texttt{Polar}(G^t)\|.$$ 
The authors showed that $\varepsilon_q \leq 1$ and converges to $0$ at a double exponential rate over $q$.

Firstly, we recover the bound for $L^t$. The proof of Lemma \ref{app:lem_L} can be followed directly to gain the following:
\begin{eqnarray*}
    &&|f(X^{t+1}+\tau^tE^t) - f(X^{t+1}) - f(X^t+\tau^tE^t) + f(X^t)|
    \\& = & |f(Z+U+V) - f(Z+U-V) - f(Z-U+V) + f(Z-U-V)|
    \\&\leq& 4\left(\|U\|_F^2 + \|V\|_F^2\right)L,
\end{eqnarray*}
where, similarly, we define $Z=\frac{1}{2}(X^{t+1} + X^t + \tau^t E^t), \ U=\frac{1}{2}(X^{t+1} - X^t), \ V = \frac{\tau^t}{2}E^t. $

The error emerges in the $\|U\|_F$ estimate:\\
$\|U\|_F = \frac{\gamma^t}{2}\|V^t\|_F\leq\frac{\gamma^tC_2}{2}\|V^t\|\leq \frac{\gamma^tC_2(1+\varepsilon_q)}{2},\ \|V\|_F=\frac{\tau^t}{2}\|E^t\|_F=\frac{\tau^t}{2}. $ Thus, we derive
\begin{eqnarray*}
    L^t&=&\frac{\|G^t(X^{t+1},E^t) -G^t(X^t,E^t)\|_*}{\|X^{t+1}-X^t\|} 
    \\&= & \frac{|f(Z+U+V) - f(Z+U-V) - f(Z-U+V) + f(Z-U-V)|\cdot\|E^t\|_*}{\gamma^t\tau^t\|V^t\|}
    \\&\leq& \frac{\|E\|_F}{c_{2*}\gamma^t\tau^t\|V^t\|}\cdot (C_2^2(\gamma^t)^2(1+\varepsilon_q)^2 + (\tau^t)^2)L,
    \\&\overset{\ref{app:eq:norms}}{\leq}& \frac{1}{c_{2*}\gamma^t\tau^t(\|\texttt{Polar}(G^t)-V^t\| + \|\texttt{Polar}(G^t)\|)}\cdot (C_2^2(\gamma^t)^2(1+\varepsilon_q)^2 + (\tau^t)^2)L
    \\&{\leq}& \frac{4}{c_{2*}\gamma^t\tau^t(1-\varepsilon_q)}\cdot (C_2^2(\gamma^t)^2 + (\tau^t)^2)L.
\end{eqnarray*}
Again, $\gamma^t$ and $\tau^t$ should be linearly related. We choose $\tau^t=C_2\gamma^t$, resulting in
\begin{eqnarray*}
    L^t&\leq& \frac{4C_2}{c_{2*}(1-\varepsilon_q)} =: \hat L.
\end{eqnarray*}
Now, we continue with the proof of the Theorem \ref{app:th:conv_muon}.
The first estimates are exactly the same as in Theorem \ref{app:th:conv1}:
\begin{align*}
    f^{t+1}(X^{t+1}) - f^t(X^t) = & (f^{t+1}(X^{t+1}) - f^{t}(X^{t+1})) + (f^{t}(X^{t+1}) - f^t(X^t)).
\end{align*}
\begin{eqnarray*}
    f^{t+1}(X^{t+1}) - f^{t}(X^{t+1}) \leq \frac{MC_2(\gamma^t)^2L^t}{2A\sqrt{L^0}}.
\end{eqnarray*}    
Now, we analyze the other term.
\begin{eqnarray*}
    f^{t}(X^{t+1}) - f^{t}(X^t) &\overset{\eqref{Conv}}{\leq }&\langle \nabla f^{t}(X^{t+1}), X^{t+1}-X^t\rangle
    \\&= & \langle G^{t}(X^{t},E^t), X^{t+1}-X^t\rangle
    \\&&+ \langle \nabla f^{t}(X^{t+1})-G^t(X^t,E^t), X^{t+1}-X^t\rangle
    \\&\overset{\eqref{Conj}}{\leq}& -\gamma^t \langle G^{t}(X^{t},E^t), V^t\rangle
    \\&&+ \|\nabla f^{t}(X^{t+1})-G^{t}(X^{t},E^t)\|_*\|X^{t+1}-X^t\|
    \\&\overset{\eqref{Conj}}{\leq}& -\gamma^t \langle G^{t}(X^{t},E^t), \texttt{Polar}(G^t)\rangle
    \\&&+ \gamma^t\|G^{t}(X^{t},E^t)\|_*\|V^t - \texttt{Polar}(G^t) \|
    \\&&+ \|\nabla f^{t}(X^{t+1})-G^{t}(X^{t},E^t)\|_*\|X^{t+1}-X^t\|.
\end{eqnarray*}
Here, we utilize the notation of $\varepsilon_q$ to derive
\begin{eqnarray*}    
    f^{t}(X^{t+1}) - f^{t}(X^t) &\leq& -\gamma^t \|G^t(X^t,E^t)\|_*
    \\&&+ \gamma^t\|G^{t}(X^{t},E^t)\|_*\varepsilon_q
    \\&&+ \|\nabla f^{t}(X^{t+1})-G^{t}(X^{t},E^t)\|_*\|X^{t+1}-X^t\|
    \\&\leq& -\gamma^t (1-\varepsilon_q) \|G^t(X^t,E^t)\|_*
    \\&&+  \|\nabla f^t(X^{t+1})-G^t(X^{t+1},E^t)\|_*\|X^{t+1}-X^t\|
    \\&&+ \left(\frac{\|G^{t}(X^{t+1},E^t)-G^{t}(X^{t},E^t)\|_*}{\|X^{t+1}-X^t\|}\right)\|X^{t+1}-X^t\|^2
    \\&\overset{\eqref{Norm}}{\leq}& -\gamma^t (1-\varepsilon_q) \|\nabla f^t(X^t)\|_* 
    \\&&+ \gamma^t(1-\varepsilon_q)\|G^{t}(X^{t},E^t)-\nabla f^t(X^t)\|_*
    \\&&+ 2\gamma^t \|\nabla f^t(X^{t+1})-G^t(X^{t+1},E^t)\|_*
    \\&&+ 4L^t(\gamma^t)^2
    .
\end{eqnarray*}
Here, in the last transition we estimated
$$ \|X^{t+1}-X^t\|=\gamma^t \|V^t\|\overset{\eqref{Norm}}{\leq} \gamma^t\|\texttt{Polar}(G^t)\|+\gamma^t\|V^t-\texttt{Polar}(G^t)\|\leq \gamma^t(1+\varepsilon_q) \leq 2\gamma^t.  $$
Next, we combine the estimates together.
\begin{eqnarray*}
    f^{t+1}(X^{t+1}) - f^t(X^t) &\leq & \frac{MC_2(\gamma^t)^2L^t}{2A\sqrt{L^0}}
    \\&&-\gamma^t (1-\varepsilon_q) \|\nabla f^t(X^t)\|_* 
    \\&&+ \gamma^t\|G^{t}(X^{t},E^t)-\nabla f^t(X^t)\|_*
    \\&&+ 2\gamma^t \|\nabla f^t(X^{t+1})-G^t(X^{t+1},E^t)\|_*
    \\&&+ 4L^t(\gamma^t)^2
    \\&\leq & -\gamma^t (1-\varepsilon_q) \|\nabla f^t(X^t)\|_* 
    \\&&+ \gamma^t(1-\varepsilon_q)\|G^{t}(X^{t},E^t)-\nabla f^t(X^t)\|_*
    \\&&+ 2\gamma^t \|\nabla f^t(X^{t+1})-G^t(X^{t+1},E^t)\|_*
    \\&&+ L^t(\gamma^t)^2\left(4+\frac{MC_2}{2A\sqrt{L^0}}\right)
    .
\end{eqnarray*}
Summing over $t=1,\dots T$ we derive
\begin{eqnarray*}
    f^{T}(X^{T}) - f^0(X^0) &\leq&-(1-\varepsilon_q) \sum_{t=0}^{T-1}\gamma^t\|\nabla f^t(X^t)\|_*
    \\&&+\left(4+\frac{MC_2 }{2A\sqrt{L^0}}\right) \sum_{t=0}^{T-1} L^t(\gamma^t)^2
    \\&&+  \sum_{t=0}^{T-1}\gamma^t\|G^t(X^{t+1},E^t) - \nabla f^t(X^{t+1})\|_*
    \\&&+ 2 \sum_{t=0}^{T-1}\gamma^t\|G^t(X^{t},E^t) - \nabla f^t(X^{t})\|_*.
\end{eqnarray*}
After rearranging the terms,
\begin{eqnarray*}
    &&(1-\varepsilon_q)\mathbb{E}\left[\sum_{t=0}^{T-1}\frac{\gamma^t}{\sum_{i=0}^{T-1}\gamma^i}\|\nabla f^t(x^t)\|_*\right] 
    \\&\leq&  (f^0(x^0)-\widetilde f)\mathbb{E}\left[\frac{1}{\sum_{i=0}^{T-1}\gamma^i}\right] 
    \\&&+ \mathbb{E}\left[\sum_{t=0}^{T-1}\frac{\gamma^t}{\sum_{i=0}^{T-1}\gamma^i}\|G^t(X^{t+1},E^t) - \nabla f^t(X^{t+1})\|_*\right]
    \\&&+ 2\mathbb{E}\left[\sum_{t=0}^{T-1}\frac{\gamma^t}{\sum_{i=0}^{T-1}\gamma^i}\|G^t(X^{t},E^t) - \nabla f^t(X^{t})\|_*\right]
    \\&&+\left(4+\frac{MC_2}{2A\sqrt{L^0}}\right)\mathbb{E}\left[\sum_{t=0}^{T-1}\frac{L^t(\gamma^t)^2}{\sum_{i=0}^{T-1}\gamma^i}\right].
\end{eqnarray*}
We then implement the same estimate for the second and third terms, bounding those with\\ $4MC_{2*}.$
Next, \eqref{Hol} is applied to the last term:
\begin{eqnarray*}
    (1-\varepsilon_q)\mathbb{E}\left[\sum_{t=0}^{T-1}\frac{\gamma^t}{\sum_{i=0}^{T-1}\gamma^i}\|\nabla f^t(X^t)\|_*\right] &\leq & (f^0(X^0)-\widetilde f)\mathbb{E}\left[\frac{1}{\sum_{i=0}^{T-1}\gamma^i}\right] 
    \\&&+8 MC_{2*}
    \\&&+\left(4+\frac{MC_2}{2A\sqrt{L^0}}\right) 
    \\&&\cdot \left(\mathbb{E}\left[\frac{1}{\sum_{i=0}^{T-1}\gamma^i}\right]^2\right)^{1/2}
    \\&&\cdot\left(\mathbb{E}\left[\sum_{t=0}^{T-1}L^t(\gamma^t)^2\right]^2\right)^{1/2}
    .
\end{eqnarray*}
The following estimates are repeated from Theorem \ref{app:th:conv1}
\begin{eqnarray*}
    \sum_{t=0}^{T-1}L^t_\infty(\gamma^t)^2 = A^2\sum_{t=0}^{T-1}\frac{L^t}{\sum_{i=0}^{t-1}L_\infty^{i}} &\leq& 2A^2\frac{\hat{L}}{L^0} + 2A^2 \log \frac{T\hat{L}}{L^0},
    \\\frac{1}{\sum_{i=0}^{T-1}\gamma^i} &\leq& \frac{\sqrt{\hat{L}}}{A\sqrt{T}}.
\end{eqnarray*}
Next, we plug those into the equation, introducing $\chi_q=\frac{1}{1-\varepsilon_q}$.
\begin{eqnarray*}
    \mathbb{E}\left[\sum_{t=0}^{T-1}\frac{\gamma^t}{\sum_{i=0}^{T-1}\gamma^i}\|\nabla f^t(X^t)\|_*\right] &\leq&  \frac{\chi_q (f^0(X^0)-\widetilde f)\sqrt{\hat{L}}}{A\sqrt{T}} 
    + \chi_q{MC_2}
    \\&&+\chi_q\left(1+\frac{MC_2}{2A\sqrt{L^0}}\right) \cdot \frac{\sqrt{\hat{L}}}{A\sqrt{T}}
    \\&&\cdot\left(\mathbb{E}\left[2A^2\frac{\hat{L}}{L^0} + 2A^2 \log \frac{T\hat{L}}{L^0}\right]^2\right)^{1/2}
    \\&\leq &\chi_q\Bigg( \frac{(f^0(X^0)-\widetilde f)\sqrt{\hat{L}}}{A\sqrt{T}}
    +8 M
    \\&&+\frac{8{\hat{L}^{3/2}}}{\sqrt{T}}\left(4A+\frac{MC_2}{2 \sqrt{L^0}}\right)
    \\&&\cdot\left(\mathbb{E}\left[\frac{1}{L^0}\right]^2\right)^{1/2}\cdot \left(\mathbb{E}\left[\log \frac{T\hat{L}}{L^0}\right]^2\right)^{1/2}\Bigg).
\end{eqnarray*}
Same way as in the previous theorem, we need estimate the initial function discrepancy $f(x^0)-f^0(x^0)$ due to inability to compute $f^0$ directly:
\begin{eqnarray*}
    f^0(x^0)-\widetilde{f} &=& \mathbb{E}_e[f(x^0+\tau^0 e)] - \widetilde{f}
    \\&= & \mathbb{E}_E[f(X^0+\tau^0 E)-f(X^0)] +f(X^0)- \widetilde{f}
    \\&\overset{(Ass.\ref{ass:lip})}{\leq} & M\mathbb{E}_E[\|X^0+\tau^0E-X^0\|] +f(X^0)- \widetilde{f}
    \\&= & \tau^0 M\mathbb{E}_e[\|e\|] +f(X^0)- \widetilde{f}
    \\&= & C_2\gamma^0 M + f(X^0)- \widetilde{f}
    \\&= & \frac{ C_2 MA}{\sqrt{L^0}} + f(X^0)- \widetilde{f}
    .
\end{eqnarray*}
Thus, we derive
\begin{eqnarray*}
    \mathbb{E}\left[\sum_{t=0}^{T-1}\frac{\gamma^t}{\sum_{i=0}^{T-1}\gamma^i}\|\nabla f^t(X^t)\|_*\right] &\leq&  \chi_q\Bigg( \frac{(f(X^0)-\widetilde f + \frac{ C_2 MA}{\sqrt{L^0}})\sqrt{\hat{L}}}{A\sqrt{T}}
    +8 M
    \\&&+\frac{8{\hat{L}^{3/2}}}{\sqrt{T}}\left(4A+\frac{MC_2}{2 \sqrt{L^0}}\right)
    \\&&\cdot\left(\mathbb{E}\left[\frac{1}{L^0}\right]^2\right)^{1/2}\cdot \left(\mathbb{E}\left[\log \frac{T\hat{L}}{L^0}\right]^2\right)^{1/2}\Bigg).
\end{eqnarray*}
After substituting $A=\sqrt{f(x^0)-\widetilde{f}}$ and $\hat{L}$, we arrive at the Theorem statement.
\end{proof}

\end{appendixpart}

\begin{thebibliography}{94}
\providecommand{\natexlab}[1]{#1}
\providecommand{\url}[1]{\texttt{#1}}
\expandafter\ifx\csname urlstyle\endcsname\relax
  \providecommand{\doi}[1]{doi: #1}\else
  \providecommand{\doi}{doi: \begingroup \urlstyle{rm}\Url}\fi

\bibitem[Howard and Ruder(2018)]{howard2018universal}
Jeremy Howard and Sebastian Ruder.
\newblock Universal language model fine-tuning for text classification.
\newblock In \emph{Proceedings of the 56th Annual Meeting of the Association for Computational Linguistics (Volume 1: Long Papers)}, pages 328--339, 2018.

\bibitem[Lester et~al.(2021)Lester, Al-Rfou, and Constant]{lester2021power}
Brian Lester, Rami Al-Rfou, and Noah Constant.
\newblock The power of scale for parameter-efficient prompt tuning.
\newblock In \emph{Proceedings of the 2021 conference on empirical methods in natural language processing}, pages 3045--3059, 2021.

\bibitem[Gururangan et~al.(2020)Gururangan, Marasovi{\'c}, Swayamdipta, Lo, Beltagy, Downey, and Smith]{gururangan2020don}
Suchin Gururangan, Ana Marasovi{\'c}, Swabha Swayamdipta, Kyle Lo, Iz~Beltagy, Doug Downey, and Noah~A Smith.
\newblock Don’t stop pretraining: Adapt language models to domains and tasks.
\newblock In \emph{Proceedings of the 58th annual meeting of the association for computational linguistics}, pages 8342--8360, 2020.

\bibitem[Zhang et~al.(2020)Zhang, Sun, Galley, Chen, Brockett, Gao, Gao, Liu, and Dolan]{zhang2020dialogpt}
Yizhe Zhang, Siqi Sun, Michel Galley, Yen-Chun Chen, Chris Brockett, Xiang Gao, Jianfeng Gao, Jingjing Liu, and William~B Dolan.
\newblock Dialogpt: Large-scale generative pre-training for conversational response generation.
\newblock In \emph{Proceedings of the 58th annual meeting of the association for computational linguistics: system demonstrations}, pages 270--278, 2020.

\bibitem[Wu et~al.(2025)Wu, Chen, Li, Wang, Lu, Liu, Hwang, Hao, Pan, Meng, et~al.]{wu2025llm}
Xiao-Kun Wu, Min Chen, Wanyi Li, Rui Wang, Limeng Lu, Jia Liu, Kai Hwang, Yixue Hao, Yanru Pan, Qingguo Meng, et~al.
\newblock Llm fine-tuning: Concepts, opportunities, and challenges.
\newblock \emph{Big Data and Cognitive Computing}, 9\penalty0 (4):\penalty0 87, 2025.

\bibitem[Robbins and Monro(1951)]{robbins1951stochastic}
Herbert Robbins and Sutton Monro.
\newblock A stochastic approximation method.
\newblock \emph{The annals of mathematical statistics}, pages 400--407, 1951.

\bibitem[Kingma and Ba(2014)]{kingma2014adam}
Diederik~P Kingma and Jimmy Ba.
\newblock Adam: A method for stochastic optimization.
\newblock \emph{arXiv preprint arXiv:1412.6980}, 2014.

\bibitem[Rojas(1996)]{rojas1996backpropagation}
Raul Rojas.
\newblock The backpropagation algorithm.
\newblock In \emph{Neural networks: a systematic introduction}, pages 149--182. Springer, 1996.

\bibitem[Rajbhandari et~al.(2020)Rajbhandari, Rasley, Ruwase, and He]{rajbhandari2020zero}
Samyam Rajbhandari, Jeff Rasley, Olatunji Ruwase, and Yuxiong He.
\newblock Zero: Memory optimizations toward training trillion parameter models.
\newblock In \emph{SC20: international conference for high performance computing, networking, storage and analysis}, pages 1--16. IEEE, 2020.

\bibitem[Malladi et~al.(2023)Malladi, Gao, Nichani, Damian, Lee, Chen, and Arora]{malladi2023fine}
Sadhika Malladi, Tianyu Gao, Eshaan Nichani, Alex Damian, Jason~D Lee, Danqi Chen, and Sanjeev Arora.
\newblock Fine-tuning language models with just forward passes.
\newblock \emph{Advances in Neural Information Processing Systems}, 36:\penalty0 53038--53075, 2023.

\bibitem[Richt{\'a}rik and Tak{\'a}{\v{c}}(2014)]{richtarik2014iteration}
Peter Richt{\'a}rik and Martin Tak{\'a}{\v{c}}.
\newblock Iteration complexity of randomized block-coordinate descent methods for minimizing a composite function.
\newblock \emph{Mathematical Programming}, 144\penalty0 (1):\penalty0 1--38, 2014.

\bibitem[Luo et~al.(2024)Luo, Yu, and Li]{luo2024badam}
Qijun Luo, Hengxu Yu, and Xiao Li.
\newblock Badam: A memory efficient full parameter optimization method for large language models.
\newblock \emph{Advances in Neural Information Processing Systems}, 37:\penalty0 24926--24958, 2024.

\bibitem[Dettmers et~al.(2021)Dettmers, Lewis, Shleifer, and Zettlemoyer]{dettmers20218}
Tim Dettmers, Mike Lewis, Sam Shleifer, and Luke Zettlemoyer.
\newblock 8-bit optimizers via block-wise quantization.
\newblock \emph{arXiv preprint arXiv:2110.02861}, 2021.

\bibitem[Novikov et~al.(2023)Novikov, Bershatsky, Gusak, Shonenkov, Dimitrov, and Oseledets]{novikov2023few}
Georgii~Sergeevich Novikov, Daniel Bershatsky, Julia Gusak, Alex Shonenkov, Denis~Valerievich Dimitrov, and Ivan Oseledets.
\newblock Few-bit backward: Quantized gradients of activation functions for memory footprint reduction.
\newblock In \emph{International Conference on Machine Learning}, pages 26363--26381. PMLR, 2023.

\bibitem[Sun et~al.(2022)Sun, Yang, Liu, Yin, Li, and Xu]{sun2022recent}
Zehua Sun, Huanqi Yang, Kai Liu, Zhimeng Yin, Zhenjiang Li, and Weitao Xu.
\newblock Recent advances in lora: A comprehensive survey.
\newblock \emph{ACM Transactions on Sensor Networks}, 18\penalty0 (4):\penalty0 1--44, 2022.

\bibitem[Wang et~al.(2024)Wang, Liang, He, Wang, and Tan]{wang2024lora}
Zhengbo Wang, Jian Liang, Ran He, Zilei Wang, and Tieniu Tan.
\newblock Lora-pro: Are low-rank adapters properly optimized?
\newblock \emph{arXiv preprint arXiv:2407.18242}, 2024.

\bibitem[Ghadimi and Lan(2013)]{ghadimi2013stochastic}
Saeed Ghadimi and Guanghui Lan.
\newblock Stochastic first-and zeroth-order methods for nonconvex stochastic programming.
\newblock \emph{SIAM journal on optimization}, 23\penalty0 (4):\penalty0 2341--2368, 2013.

\bibitem[Zhang and Ying(2024)]{zhang2024zeroth}
Qining Zhang and Lei Ying.
\newblock Zeroth-order policy gradient for reinforcement learning from human feedback without reward inference.
\newblock \emph{arXiv preprint arXiv:2409.17401}, 2024.

\bibitem[Lin et~al.(2025)Lin, Ma, Wang, and Yang]{lin2025survey}
Liting Lin, Hansong Ma, Junxiao Wang, and Shiyu Yang.
\newblock A survey on zeroth-order optimization for machine learning.
\newblock In \emph{International Conference on Web Information Systems and Applications}, pages 481--497. Springer, 2025.

\bibitem[Bar and Giryes(2025)]{bar2025zoqo}
Noga Bar and Raja Giryes.
\newblock Zoqo: Zero-order quantized optimization.
\newblock In \emph{ICASSP 2025-2025 IEEE International Conference on Acoustics, Speech and Signal Processing (ICASSP)}, pages 1--5. IEEE, 2025.

\bibitem[Stich(2019)]{stich2019unified}
Sebastian~U Stich.
\newblock Unified optimal analysis of the (stochastic) gradient method.
\newblock \emph{arXiv preprint arXiv:1907.04232}, 2019.

\bibitem[Hendrikx et~al.(2020)Hendrikx, Xiao, Bubeck, Bach, and Massoulie]{hendrikx2020statistically}
Hadrien Hendrikx, Lin Xiao, Sebastien Bubeck, Francis Bach, and Laurent Massoulie.
\newblock Statistically preconditioned accelerated gradient method for distributed optimization.
\newblock In \emph{International conference on machine learning}, pages 4203--4227. PMLR, 2020.

\bibitem[Carmon and Hinder(2022)]{carmon2022making}
Yair Carmon and Oliver Hinder.
\newblock Making sgd parameter-free.
\newblock In \emph{Conference on learning theory}, pages 2360--2389. PMLR, 2022.

\bibitem[Deng et~al.(2024)Deng, Lan, and Lin]{deng2024uniformly}
Qi~Deng, Guanghui Lan, and Zhenwei Lin.
\newblock Uniformly optimal and parameter-free first-order methods for convex and function-constrained optimization.
\newblock \emph{arXiv preprint arXiv:2412.06319}, 2024.

\bibitem[Kreisler et~al.(2024)Kreisler, Ivgi, Hinder, and Carmon]{kreisler2024accelerated}
Itai Kreisler, Maor Ivgi, Oliver Hinder, and Yair Carmon.
\newblock Accelerated parameter-free stochastic optimization.
\newblock In \emph{The Thirty Seventh Annual Conference on Learning Theory}, pages 3257--3324. PMLR, 2024.

\bibitem[Ma and Huang(2025)]{ma2025revisiting}
Shaocong Ma and Heng Huang.
\newblock Revisiting zeroth-order optimization: Minimum-variance two-point estimators and directionally aligned perturbations.
\newblock \emph{arXiv preprint arXiv:2510.19975}, 2025.

\bibitem[Jordan et~al.(2024)Jordan, Jin, Boza, Jiacheng, Cesista, Newhouse, and Bernstein]{jordan2024muon}
Keller Jordan, Yuchen Jin, Vlado Boza, You Jiacheng, Franz Cesista, Laker Newhouse, and Jeremy Bernstein.
\newblock Muon: An optimizer for hidden layers in neural networks, 2024.
\newblock \emph{URL https://kellerjordan. github. io/posts/muon}, 6\penalty0 (3):\penalty0 4, 2024.

\bibitem[Bernstein and Newhouse(2024)]{bernstein2024old}
Jeremy Bernstein and Laker Newhouse.
\newblock Old optimizer, new norm: An anthology.
\newblock \emph{arXiv preprint arXiv:2409.20325}, 2024.

\bibitem[Pethick et~al.(2025)Pethick, Xie, Antonakopoulos, Zhu, Silveti-Falls, and Cevher]{pethick2025training}
Thomas Pethick, Wanyun Xie, Kimon Antonakopoulos, Zhenyu Zhu, Antonio Silveti-Falls, and Volkan Cevher.
\newblock Training deep learning models with norm-constrained lmos.
\newblock \emph{arXiv preprint arXiv:2502.07529}, 2025.

\bibitem[Cutkosky and Mehta(2020)]{cutkosky2020momentum}
Ashok Cutkosky and Harsh Mehta.
\newblock Momentum improves normalized sgd.
\newblock In \emph{International conference on machine learning}, pages 2260--2268. PMLR, 2020.

\bibitem[Sun et~al.(2023)Sun, Wang, Li, and Wang]{sun2023momentum}
Tao Sun, Qingsong Wang, Dongsheng Li, and Bao Wang.
\newblock Momentum ensures convergence of signsgd under weaker assumptions.
\newblock In \emph{International Conference on Machine Learning}, pages 33077--33099. PMLR, 2023.

\bibitem[Riabinin et~al.(2025)Riabinin, Shulgin, Gruntkowska, and Richt{\'a}rik]{riabinin2025gluon}
Artem Riabinin, Egor Shulgin, Kaja Gruntkowska, and Peter Richt{\'a}rik.
\newblock Gluon: Making muon \& scion great again!(bridging theory and practice of lmo-based optimizers for llms).
\newblock \emph{arXiv preprint arXiv:2505.13416}, 2025.

\bibitem[Spall(1998)]{spall1998overview}
James~C Spall.
\newblock An overview of the simultaneous perturbation method for efficient optimization.
\newblock \emph{Johns Hopkins apl technical digest}, 19\penalty0 (4):\penalty0 482--492, 1998.

\bibitem[Flaxman et~al.(2004)Flaxman, Kalai, and McMahan]{flaxman2004online}
Abraham~D Flaxman, Adam~Tauman Kalai, and H~Brendan McMahan.
\newblock Online convex optimization in the bandit setting: gradient descent without a gradient.
\newblock \emph{arXiv preprint cs/0408007}, 2004.

\bibitem[Duchi et~al.(2015)Duchi, Jordan, Wainwright, and Wibisono]{duchi2015optimal}
John~C Duchi, Michael~I Jordan, Martin~J Wainwright, and Andre Wibisono.
\newblock Optimal rates for zero-order convex optimization: The power of two function evaluations.
\newblock \emph{IEEE Transactions on Information Theory}, 61\penalty0 (5):\penalty0 2788--2806, 2015.

\bibitem[Zhao et~al.(2024)Zhao, Dang, Ye, Dai, Qian, and Tsang]{zhao2024second}
Yanjun Zhao, Sizhe Dang, Haishan Ye, Guang Dai, Yi~Qian, and Ivor~W Tsang.
\newblock Second-order fine-tuning without pain for llms: A hessian informed zeroth-order optimizer.
\newblock \emph{arXiv preprint arXiv:2402.15173}, 2024.

\bibitem[Williams(1992)]{williams1992simple}
Ronald~J Williams.
\newblock Simple statistical gradient-following algorithms for connectionist reinforcement learning.
\newblock \emph{Machine learning}, 8\penalty0 (3):\penalty0 229--256, 1992.

\bibitem[Qiu et~al.(2025)Qiu, Xie, Yan, Yang, and Shu]{qiu2025zeroth}
Junbin Qiu, Zhengpeng Xie, Xiangda Yan, Yongjie Yang, and Yao Shu.
\newblock Zeroth-order optimization is secretly single-step policy optimization.
\newblock \emph{arXiv preprint arXiv:2506.14460}, 2025.

\bibitem[Seung et~al.(2026)Seung, Lee, and Ko]{seung2026low}
Hyunseok Seung, Jaewoo Lee, and Hyunsuk Ko.
\newblock Low-rank curvature for zeroth-order optimization in llm fine-tuning.
\newblock In \emph{Proceedings of the AAAI Conference on Artificial Intelligence}, volume~40, pages 25235--25242, 2026.

\bibitem[Chen et~al.(2019)Chen, Liu, Xu, Li, Lin, Hong, and Cox]{chen2019zo}
Xiangyi Chen, Sijia Liu, Kaidi Xu, Xingguo Li, Xue Lin, Mingyi Hong, and David Cox.
\newblock Zo-adamm: Zeroth-order adaptive momentum method for black-box optimization.
\newblock \emph{Advances in neural information processing systems}, 32, 2019.

\bibitem[Liu et~al.(2018)Liu, Kailkhura, Chen, Ting, Chang, and Amini]{liu2018zeroth}
Sijia Liu, Bhavya Kailkhura, Pin-Yu Chen, Paishun Ting, Shiyu Chang, and Lisa Amini.
\newblock Zeroth-order stochastic variance reduction for nonconvex optimization.
\newblock \emph{Advances in neural information processing systems}, 31, 2018.

\bibitem[Ji et~al.(2019)Ji, Wang, Zhou, and Liang]{ji2019improved}
Kaiyi Ji, Zhe Wang, Yi~Zhou, and Yingbin Liang.
\newblock Improved zeroth-order variance reduced algorithms and analysis for nonconvex optimization.
\newblock In \emph{International conference on machine learning}, pages 3100--3109. PMLR, 2019.

\bibitem[Gautam et~al.(2024)Gautam, Park, Zhou, Raman, and Ha]{gautam2024variance}
Tanmay Gautam, Youngsuk Park, Hao Zhou, Parameswaran Raman, and Wooseok Ha.
\newblock Variance-reduced zeroth-order methods for fine-tuning language models.
\newblock \emph{arXiv preprint arXiv:2404.08080}, 2024.

\bibitem[Chen et~al.(2017)Chen, Zhang, Sharma, Yi, and Hsieh]{chen2017zoo}
Pin-Yu Chen, Huan Zhang, Yash Sharma, Jinfeng Yi, and Cho-Jui Hsieh.
\newblock Zoo: Zeroth order optimization based black-box attacks to deep neural networks without training substitute models.
\newblock In \emph{ACM AISec}, 2017.

\bibitem[Zhang et~al.(2024)Zhang, Li, Hong, Li, Zhang, Zheng, Chen, Lee, Yin, Hong, et~al.]{zhang2024revisiting}
Yihua Zhang, Pingzhi Li, Junyuan Hong, Jiaxiang Li, Yimeng Zhang, Wenqing Zheng, Pin-Yu Chen, Jason~D Lee, Wotao Yin, Mingyi Hong, et~al.
\newblock Revisiting zeroth-order optimization for memory-efficient llm fine-tuning: A benchmark.
\newblock \emph{arXiv preprint arXiv:2402.11592}, 2024.

\bibitem[Nemirovskij and Yudin(1983)]{nemirovskij1983problem}
Arkadij~Semenovi{\v{c}} Nemirovskij and David~Borisovich Yudin.
\newblock Problem complexity and method efficiency in optimization.
\newblock 1983.

\bibitem[Duchi et~al.(2011)Duchi, Hazan, and Singer]{duchi2011adaptive}
John Duchi, Elad Hazan, and Yoram Singer.
\newblock Adaptive subgradient methods for online learning and stochastic optimization.
\newblock \emph{Journal of machine learning research}, 12\penalty0 (7), 2011.

\bibitem[Zeiler(2012)]{zeiler2012adadelta}
Matthew~D Zeiler.
\newblock Adadelta: an adaptive learning rate method.
\newblock \emph{arXiv preprint arXiv:1212.5701}, 2012.

\bibitem[Orabona(2019)]{orabona2019modern}
Francesco Orabona.
\newblock A modern introduction to online learning.
\newblock \emph{arXiv preprint arXiv:1912.13213}, 2019.

\bibitem[Orabona(2013)]{orabona2013dimension}
Francesco Orabona.
\newblock Dimension-free exponentiated gradient.
\newblock \emph{Advances in Neural Information Processing Systems}, 26, 2013.

\bibitem[McMahan and Orabona(2014)]{mcmahan2014unconstrained}
H~Brendan McMahan and Francesco Orabona.
\newblock Unconstrained online linear learning in hilbert spaces: Minimax algorithms and normal approximations.
\newblock In \emph{Conference on Learning Theory}, pages 1020--1039. PMLR, 2014.

\bibitem[Orabona and P{\'a}l(2016)]{orabona2016coin}
Francesco Orabona and D{\'a}vid P{\'a}l.
\newblock Coin betting and parameter-free online learning.
\newblock \emph{Advances in Neural Information Processing Systems}, 29, 2016.

\bibitem[Cutkosky and Orabona(2018)]{cutkosky2018black}
Ashok Cutkosky and Francesco Orabona.
\newblock Black-box reductions for parameter-free online learning in banach spaces.
\newblock In \emph{Conference On Learning Theory}, pages 1493--1529. PMLR, 2018.

\bibitem[Ivgi et~al.(2023)Ivgi, Hinder, and Carmon]{ivgi2023dog}
Maor Ivgi, Oliver Hinder, and Yair Carmon.
\newblock Dog is sgd’s best friend: A parameter-free dynamic step size schedule.
\newblock In \emph{International conference on machine learning}, pages 14465--14499. PMLR, 2023.

\bibitem[Defazio and Mishchenko(2023)]{defazio2023learning}
Aaron Defazio and Konstantin Mishchenko.
\newblock Learning-rate-free learning by d-adaptation.
\newblock In \emph{International conference on machine learning}, pages 7449--7479. PMLR, 2023.

\bibitem[Mishchenko and Defazio(2023)]{mishchenko2023prodigy}
Konstantin Mishchenko and Aaron Defazio.
\newblock Prodigy: An expeditiously adaptive parameter-free learner.
\newblock \emph{arXiv preprint arXiv:2306.06101}, 2023.

\bibitem[Schaipp et~al.(2023)Schaipp, Ohana, Eickenberg, Defazio, and Gower]{schaipp2023momo}
Fabian Schaipp, Ruben Ohana, Michael Eickenberg, Aaron Defazio, and Robert~M Gower.
\newblock Momo: Momentum models for adaptive learning rates.
\newblock \emph{arXiv preprint arXiv:2305.07583}, 2023.

\bibitem[Attia and Koren(2023)]{attia2023sgd}
Amit Attia and Tomer Koren.
\newblock Sgd with adagrad stepsizes: Full adaptivity with high probability to unknown parameters, unbounded gradients and affine variance.
\newblock In \emph{International Conference on Machine Learning}, pages 1147--1171. PMLR, 2023.

\bibitem[Attia and Koren(2024)]{attia2024free}
Amit Attia and Tomer Koren.
\newblock How free is parameter-free stochastic optimization?
\newblock \emph{arXiv preprint arXiv:2402.03126}, 2024.

\bibitem[Khaled et~al.(2023)Khaled, Mishchenko, and Jin]{khaled2023dowg}
Ahmed Khaled, Konstantin Mishchenko, and Chi Jin.
\newblock Dowg unleashed: An efficient universal parameter-free gradient descent method.
\newblock \emph{Advances in Neural Information Processing Systems}, 36:\penalty0 6748--6769, 2023.

\bibitem[Malitsky and Mishchenko(2019)]{malitsky2019adaptive}
Yura Malitsky and Konstantin Mishchenko.
\newblock Adaptive gradient descent without descent.
\newblock \emph{arXiv preprint arXiv:1910.09529}, 2019.

\bibitem[Mishkin et~al.(2024)Mishkin, Khaled, Wang, Defazio, and Gower]{mishkin2024directional}
Aaron Mishkin, Ahmed Khaled, Yuanhao Wang, Aaron Defazio, and Robert~M Gower.
\newblock Directional smoothness and gradient methods: Convergence and adaptivity.
\newblock \emph{Advances in Neural Information Processing Systems}, 37:\penalty0 14810--14848, 2024.

\bibitem[Medyakov et~al.(2026)Medyakov, Sergey, Molodtsov, Zmushko, Grigoriy, Petrov, and Beznosikov]{medyakovsign}
Daniil Medyakov, Stanko Sergey, Gleb Molodtsov, Philip Zmushko, Evseev Grigoriy, Egor Petrov, and Aleksandr Beznosikov.
\newblock Sign-sgd via parameter-free optimization.
\newblock In \emph{The Fourteenth International Conference on Learning Representations}, 2026.

\bibitem[Bomze et~al.(2021)Bomze, Rinaldi, and Zeffiro]{bomze2021frank}
Immanuel~M Bomze, Francesco Rinaldi, and Damiano Zeffiro.
\newblock Frank--wolfe and friends: a journey into projection-free first-order optimization methods.
\newblock \emph{4OR}, 19\penalty0 (3):\penalty0 313--345, 2021.

\bibitem[Braun et~al.(2022)Braun, Carderera, Combettes, Hassani, Karbasi, Mokhtari, and Pokutta]{braun2022conditional}
G{\'a}bor Braun, Alejandro Carderera, Cyrille~W Combettes, Hamed Hassani, Amin Karbasi, Aryan Mokhtari, and Sebastian Pokutta.
\newblock Conditional gradient methods.
\newblock \emph{arXiv preprint arXiv:2211.14103}, 2022.

\bibitem[Bernstein et~al.(2018)Bernstein, Wang, Azizzadenesheli, and Anandkumar]{bernstein2018signsgd}
Jeremy Bernstein, Yu-Xiang Wang, Kamyar Azizzadenesheli, and Animashree Anandkumar.
\newblock signsgd: Compressed optimisation for non-convex problems.
\newblock In \emph{International conference on machine learning}, pages 560--569. PMLR, 2018.

\bibitem[Karimireddy et~al.(2019)Karimireddy, Rebjock, Stich, and Jaggi]{karimireddy2019error}
Sai~Praneeth Karimireddy, Quentin Rebjock, Sebastian Stich, and Martin Jaggi.
\newblock Error feedback fixes signsgd and other gradient compression schemes.
\newblock In \emph{International conference on machine learning}, pages 3252--3261. PMLR, 2019.

\bibitem[Balles and Hennig(2018)]{balles2018dissecting}
Lukas Balles and Philipp Hennig.
\newblock Dissecting adam: The sign, magnitude and variance of stochastic gradients.
\newblock In \emph{International Conference on Machine Learning}, pages 404--413. PMLR, 2018.

\bibitem[Balles et~al.(2020)Balles, Pedregosa, and Roux]{balles2020geometry}
Lukas Balles, Fabian Pedregosa, and Nicolas~Le Roux.
\newblock The geometry of sign gradient descent.
\newblock \emph{arXiv preprint arXiv:2002.08056}, 2020.

\bibitem[Gupta et~al.(2025)Gupta, Celente, Shivanna, Braithwaite, Dexter, Tang, Udagawa, Silva, Ramanath, and Keerthi]{gupta2025effective}
Aman Gupta, Rafael Celente, Abhishek Shivanna, DT~Braithwaite, Gregory Dexter, Shao Tang, Hiroto Udagawa, Daniel Silva, Rohan Ramanath, and S~Sathiya Keerthi.
\newblock Effective quantization of muon optimizer states.
\newblock \emph{arXiv preprint arXiv:2509.23106}, 2025.

\bibitem[Huang et~al.(2025)Huang, Luo, and Chen]{huang2025limuon}
Feihu Huang, Yuning Luo, and Songcan Chen.
\newblock Limuon: Light and fast muon optimizer for large models.
\newblock \emph{arXiv preprint arXiv:2509.14562}, 2025.

\bibitem[Ren and Luo(2025)]{ren2025parameter}
Kunjie Ren and Luo Luo.
\newblock A parameter-free and near-optimal zeroth-order algorithm for stochastic convex optimization.
\newblock \emph{arXiv preprint arXiv:2502.05600}, 2025.

\bibitem[Chen and Rogozin(2026)]{chen2026parameter}
Jiawei Chen and Alexander Rogozin.
\newblock A parameter-free zeroth-order algorithm for decentralized stochastic convex optimization.
\newblock \emph{arXiv preprint arXiv:2603.15219}, 2026.

\bibitem[Peng et~al.()Peng, Liu, Shang, and Liu]{pengparameter}
Yuxing Peng, Yuanyuan Liu, Fanhua Shang, and Hongying Liu.
\newblock Parameter-free variance reduced zeroth-order optimization for non-convex problems.

\bibitem[Si et~al.(2025)Si, Zhang, and Shen]{si2025adamuon}
Chongjie Si, Debing Zhang, and Wei Shen.
\newblock Adamuon: Adaptive muon optimizer.
\newblock \emph{arXiv preprint arXiv:2507.11005}, 2025.

\bibitem[Li et~al.(2025)Li, Liu, Liang, Chen, and Zhao]{li2025normuon}
Zichong Li, Liming Liu, Chen Liang, Weizhu Chen, and Tuo Zhao.
\newblock Normuon: Making muon more efficient and scalable.
\newblock \emph{arXiv preprint arXiv:2510.05491}, 2025.

\bibitem[Sahu et~al.(2019)Sahu, Zaheer, and Kar]{sahu2019towards}
Anit~Kumar Sahu, Manzil Zaheer, and Soummya Kar.
\newblock Towards gradient free and projection free stochastic optimization.
\newblock In \emph{The 22nd International Conference on Artificial Intelligence and Statistics}, pages 3468--3477. PMLR, 2019.

\bibitem[Carderera et~al.(2021)Carderera, Diakonikolas, Lin, and Pokutta]{carderera2021parameter}
Alejandro Carderera, Jelena Diakonikolas, Cheuk~Yin Lin, and Sebastian Pokutta.
\newblock Parameter-free locally accelerated conditional gradients.
\newblock In \emph{International Conference on Machine Learning}, pages 1283--1293. PMLR, 2021.

\bibitem[Ito et~al.(2023)Ito, Lu, and He]{ito2023parameter}
Masaru Ito, Zhaosong Lu, and Chuan He.
\newblock A parameter-free conditional gradient method for composite minimization under h{\"o}lder condition.
\newblock \emph{Journal of Machine Learning Research}, 24\penalty0 (166):\penalty0 1--34, 2023.

\bibitem[Veprikov et~al.(2024)Veprikov, Bogdanov, Minashkin, and Beznosikov]{veprikov2024new}
Andrey Veprikov, Alexander Bogdanov, Vladislav Minashkin, and Aleksandr Beznosikov.
\newblock New aspects of black box conditional gradient: Variance reduction and one point feedback.
\newblock \emph{Chaos, Solitons \& Fractals}, 189:\penalty0 115654, 2024.

\bibitem[Chen et~al.(2024)Chen, Zhang, Cao, Yuan, and Wen]{chen2024enhancing}
Yiming Chen, Yuan Zhang, Liyuan Cao, Kun Yuan, and Zaiwen Wen.
\newblock Enhancing zeroth-order fine-tuning for language models with low-rank structures.
\newblock \emph{arXiv preprint arXiv:2410.07698}, 2024.

\bibitem[Petrov et~al.(2025)Petrov, Evseev, Antonov, Veprikov, Bushkov, Moiseev, and Beznosikov]{petrov2025leveraging}
Egor Petrov, Grigoriy Evseev, Aleksey Antonov, Andrey Veprikov, Nikolay Bushkov, Stanislav Moiseev, and Aleksandr Beznosikov.
\newblock Leveraging coordinate momentum in signsgd and muon: Memory-optimized zero-order.
\newblock \emph{arXiv preprint arXiv:2506.04430}, 2025.

\bibitem[Nesterov and Spokoiny(2017)]{nesterov2017random}
Yurii Nesterov and Vladimir Spokoiny.
\newblock Random gradient-free minimization of convex functions.
\newblock \emph{Foundations of Computational Mathematics}, 17\penalty0 (2):\penalty0 527--566, 2017.

\bibitem[Li et~al.(2018)Li, Xu, Taylor, Studer, and Goldstein]{li2018visualizing}
Hao Li, Zheng Xu, Gavin Taylor, Christoph Studer, and Tom Goldstein.
\newblock Visualizing the loss landscape of neural nets.
\newblock \emph{Advances in neural information processing systems}, 31, 2018.

\bibitem[Kleinberg et~al.(2018)Kleinberg, Li, and Yuan]{kleinberg2018alternative}
Bobby Kleinberg, Yuanzhi Li, and Yang Yuan.
\newblock An alternative view: When does sgd escape local minima?
\newblock In \emph{International conference on machine learning}, pages 2698--2707. PMLR, 2018.

\bibitem[Zhou et~al.(2019)Zhou, Yang, Zhang, Liang, and Tarokh]{zhou2019sgd}
Yi~Zhou, Junjie Yang, Huishuai Zhang, Yingbin Liang, and Vahid Tarokh.
\newblock Sgd converges to global minimum in deep learning via star-convex path.
\newblock \emph{arXiv preprint arXiv:1901.00451}, 2019.

\bibitem[Liu et~al.(2022)Liu, Zhu, and Belkin]{liu2022loss}
Chaoyue Liu, Libin Zhu, and Mikhail Belkin.
\newblock Loss landscapes and optimization in over-parameterized non-linear systems and neural networks.
\newblock \emph{Applied and Computational Harmonic Analysis}, 59:\penalty0 85--116, 2022.

\bibitem[Nesterov et~al.(2018)]{nesterov2018lectures}
Yurii Nesterov et~al.
\newblock \emph{Lectures on convex optimization}, volume 137.
\newblock Springer, 2018.

\bibitem[Boyd et~al.(2003)Boyd, Xiao, and Mutapcic]{boyd2003subgradient}
Stephen Boyd, Lin Xiao, and Almir Mutapcic.
\newblock Subgradient methods.
\newblock \emph{lecture notes of EE392o, Stanford University, Autumn Quarter}, 2004\penalty0 (01):\penalty0 175, 2003.

\bibitem[Kim and Oh(2026)]{kim2026convergence}
Gyu~Yeol Kim and Min-hwan Oh.
\newblock Convergence of muon with newton-schulz.
\newblock \emph{arXiv preprint arXiv:2601.19156}, 2026.

\bibitem[Zhang et~al.(2022)Zhang, Roller, Goyal, Artetxe, Chen, Chen, Dewan, Diab, Li, Lin, et~al.]{zhang2022opt}
Susan Zhang, Stephen Roller, Naman Goyal, Mikel Artetxe, Moya Chen, Shuohui Chen, Christopher Dewan, Mona Diab, Xian Li, Xi~Victoria Lin, et~al.
\newblock Opt: Open pre-trained transformer language models.
\newblock \emph{arXiv preprint arXiv:2205.01068}, 2022.

\bibitem[Socher et~al.(2013)Socher, Perelygin, Wu, Chuang, Manning, Ng, and Potts]{socher2013recursive}
Richard Socher, Alex Perelygin, Jean Wu, Jason Chuang, Christopher~D Manning, Andrew~Y Ng, and Christopher Potts.
\newblock Recursive deep models for semantic compositionality over a sentiment treebank.
\newblock In \emph{Proceedings of the 2013 conference on empirical methods in natural language processing}, pages 1631--1642, 2013.

\bibitem[Liu et~al.(2019)Liu, Chen, Chen, and Hong]{liu2019signsgd}
Sijia Liu, Pin-Yu Chen, Xiangyi Chen, and Mingyi Hong.
\newblock signsgd via zeroth-order oracle.
\newblock In \emph{International conference on learning representations}, 2019.

\bibitem[Paszke et~al.(2019)Paszke, Gross, Massa, Lerer, Bradbury, Chanan, Killeen, Lin, Gimelshein, Antiga, et~al.]{paszke2019pytorch}
Adam Paszke, Sam Gross, Francisco Massa, Adam Lerer, James Bradbury, Gregory Chanan, Trevor Killeen, Zeming Lin, Natalia Gimelshein, Luca Antiga, et~al.
\newblock Pytorch: An imperative style, high-performance deep learning library.
\newblock \emph{Advances in neural information processing systems}, 32, 2019.

\end{thebibliography}
\end{document}